\definecolor{shadecolor}{rgb}{0.92,0.92,0.92}
\pretocmd\@bibitem{\color{black}\csname keycolor#1\endcsname}{}{\fail}
\newcommand\citecolor[1]{\@namedef{keycolor#1}{\color{blue}}}
\theoremstyle{definition}
\newtheorem{remark}{Remark}
\newtheorem{lemma}{Lemma}
\newtheorem{corollary}{Corollary}
\newtheorem{theorem}{Theorem}
\newcommand{\vast}{\bBigg@{3.2}}
\newcommand{\Vast}{\bBigg@{4.5}}
\begin{document}

\title{Velocity Obstacle Based Risk-Bounded Motion Planning for Stochastic Multi-Agent Systems}

\author{
		Xiaoxue Zhang,
		Jun Ma,
		Zilong Cheng, 	
		Masayoshi Tomizuka,~\textit{Life Fellow, IEEE,}
		and Tong Heng Lee	
		\thanks{X. Zhang, Z. Cheng, and T. H. Lee are with the NUS Graduate School for Integrative Sciences and Engineering, National University of Singapore, Singapore 119077 (e-mail: xiaoxuezhang@u.nus.edu; zilongcheng@u.nus.edu; eleleeth@nus.edu.sg).}
	\thanks{Jun Ma is with the Robotics and Autonomous Systems Thrust and the Department of Electronic and Computer Engineering, The Hong Kong University of Science and Technology, Clear Water Bay, Kowloon, Hong Kong SAR, China (e-mail: jun.ma@ust.hk).}
		\thanks{M. Tomizuka is with the Department of Mechanical Engineering, University of California, Berkeley, CA 94720 USA (e-mail: tomizuka@berkeley.edu).}
		\thanks{This work has been submitted to the IEEE for possible publication. Copyright may be transferred without notice, after which this version may no longer be accessible.}
}
%\markboth{IEEE Transactions on Cybernetics}
%	{X. Zhang\MakeLowercase{\textit{et al.}}}
\maketitle

\begin{abstract}
In this paper, we present an innovative risk-bounded motion planning methodology for stochastic multi-agent systems. For this methodology, the disturbance, noise, and model uncertainty are considered; and a velocity obstacle method is utilized to formulate the collision-avoidance constraints in the velocity space. With the exploitation of geometric information of static obstacles and velocity obstacles, a distributed optimization problem with probabilistic chance constraints is formulated for the stochastic multi-agent system. Consequently, collision-free trajectories are generated under a prescribed collision risk bound. Due to the existence of probabilistic and disjunctive constraints, the distributed chance-constrained optimization problem is reformulated as a mixed-integer program by introducing the binary variable to improve computational efficiency. This approach thus renders it possible to execute the motion planning task in the velocity space instead of the position space, which leads to smoother collision-free trajectories for multi-agent systems and higher computational efficiency. Moreover, the risk of potential collisions is bounded with this robust motion planning methodology. To validate the effectiveness of the methodology, different scenarios for multiple agents are investigated, and the simulation results clearly show that the proposed approach can generate high-quality trajectories under a predefined collision risk bound and avoid potential collisions effectively in the velocity space.
\end{abstract}

\begin{IEEEkeywords}
	Multi-agent system, mobile robots, motion planning, collision avoidance, velocity obstacle, chance constraint, model predictive control (MPC), mixed integer programming.
\end{IEEEkeywords}

%%%%%%%%%%%%%%%%%%%%%%%%%%%%%%%%%%%%%%%%%%%%%%%%%%%%%%%%%%%%%%%%%%%%%%%%%%%%%%%%
\section{Introduction}
%第一段：先写motion planning for multi-agent很重要。 现在有很多方法，主要分为两种，centralized和distributed。
%第二段：velocity obstacle方法来避障的优点，但是，仅仅用velocity obstacle 方法来避障不太行。 
%第三段：之前关于多智能体避障主要集中在deterministic的方法上。在实际环境中，噪音不可避免。当有噪音啥的时候，效果会显著下降。
Motion planning is one of the essential components of intelligent robots, and a high-quality trajectory can effectively improve robots' intelligence and autonomy~\cite{semnani2020force,hang2022decision,zhang2020trajectory,hang2021cooperative}. 
Motion planning in multi-agent systems aims to generate feasible trajectories for all agents from their initial states to the desired goal states without any collision with each other and obstacles. In this process, various constraints (including the collision-avoidance constraints and system dynamics constraints) are considered~\cite{lyu2019multivehicle,zhang2021sequential,duan2021adaptive,pan2021multilayer,bono2021swarm}.
%In general, there are two main categories to deal with motion planning tasks for multi-agent systems. The first category is to use the centralized methods to generate collision-free trajectories for all agents at the same time, leveraging the information of all agents. However, the computational efficiency of the centralized methods decreases dramatically because the complexity of the problem increases exponentially with the number of agents. Alternatively, the decentralized methods enable each agent to make decisions independently based on the collected local information of other connected agents; thus, the computational efficiency can be alleviated significantly~\cite{pan2021multilayer,bono2021swarm}. 

The velocity obstacle method is an effective decentralized motion planning method for multiple agents, which defines geometric conic regions of the feasible velocities for agents. The concepts of collision cone and velocity obstacle are firstly introduced in~\cite{fiorini1998motion}. Essentially, given the collision scenario geometrically, this method finds the range of feasible velocity quickly with minimal obstacle information without any prior knowledge or prediction~\cite{douthwaite2019velocity}. There are several variants of the velocity obstacle approach, such as reciprocal velocity obstacle~\cite{van2011reciprocal}, generalized velocity obstacle~\cite{wilkie2009generalized}, hybrid reciprocal velocity obstacle~\cite{snape2011hybrid}, etc. Most of these variants require only local information about the environment and are less computationally demanding. However, the velocity obstacle based methods consider the feasible velocity direction with neglecting the optimality and quality of the planned trajectory. Besides, these methods cannot address the system dynamics with higher complexity (for example, a double-integrator dynamics); and this becomes a burden in practical applications.
%accelerated velocity obstacle~\cite{van2011reciprocal},

%ji2016path
In order to find the feasible velocity based on the static obstacles and collision regions provided by the velocity obstacle method, the model predictive control (MPC)
can be utilized~\cite{zhang2021semi}. The MPC-based methods have the capability of addressing various constraints as part of the control synthesis problem~\cite{ zhang2019integrated, huang2021dynamic}.
In~\cite{katriniok2019nonlinear}, a distributed motion planning scheme for multiple automated vehicles is presented by solving a nonlinear MPC problem. \cite{hegde2016multi} proposes a motion planning method for multiple agents under the presence of arbitrary polygonal obstacles by using the vector field approach. In~\cite{huang2021dynamic}, a decentralized MPC method is utilized for online trajectory planning of multiple driverless vehicles.
%~\cite{cheng2017decentralized} introduces a decentralized motion planning approach based on the optimal reciprocal collision avoidance algorithm and MPC method. 
However, these MPC-based collision-avoidance methods need to address the nonconvex collision-avoidance constraints, which lead to significant computational burden. Besides, most prior research on multi-agent motion planning methods focus on deterministic systems. In practice, due to the inevitable existence of the disturbance from sensors, controllers, and actuators, as well as the uncertainty from system dynamics, these deterministic motion planning methods may suffer from the limited collision-avoidance performance.

In this sense, developing a robust motion planning methodology is meaningful for  stochastic multi-agent systems, such that the agents exhibits good robustness properties in the presence of other moving agents and static obstacles.
%In terms of the uncertainty in motion planning, some researchers utilize set-bounded uncertainty models and bounded disturbance magnitude to compute feasible trajectories, under the assumption that the disturbances and uncertainties are maintained within the prescribed limiting bounds~\cite{richards2005robust,yu2010robust}. 
One effective method is to characterize the uncertainties in a probabilistic manner and find the optimal sequence of control inputs subject to chance constraints~\cite{zhu2019chance,peng2021seperated}. Thus, a risk bound can be stipulated in the chance constraints.
%The probabilistic formulation with chance constraints admits multiple advantages compared to the traditional set-bounded counterpart, as many uncertainties in multi-agent systems can be represented more reasonably by a stochastic process than a bounded set. Besides, a risk bound can be stipulated in the chance constraints.
There are several previous works regarding the chance-constrained motion planning with the existence of obstacles~\cite{zhang2020trajectory, blackmore2011chance}. However, most of the existing probabilistic approaches still attempt the motion planning task in the position space, which means the positional information is mainly utilized to make a collision-avoidance decision. Indeed, the consideration of integration in the problem formulation promotes the smoothness of the trajectory. As we consider the potential collisions with other agents in the velocity space, the system inputs are obtained with the integration operator of the velocity vector, which produces a high-quality trajectory.
%These chance constraints may increase the computing time and planning in position space may generate low-quality trajectories for lots of moving obstacles in the environment. 

%第四段：提出了distributed方法for stochastic system。利用VO来解决，同时可以构建了probabilistic collision avoidance --> Chance constraint. 为了有效求解搞问题，最终转变为了deterministic的MINP方法实现有效求解。我们的优点是：1.考虑了噪声的存在; 2. risk-bounded; 3.velocity-space; 4.解决了VO不能用于二阶系统的方法，并且提供了smooth的traj。5. 计算效率与传统mpc避撞方法有显著提升。
Motivated by these results, this paper presents a novel approach to generate risk-bounded and optimal trajectories for stochastic multi-agent systems in the velocity space. The feasible regions of velocity vector for the ego agent can be obtained by using the geometric information of the velocity obstacles concerning other agents. Then, the feasible regions are formulated as probabilistic collision-avoidance constraints, and further transformed into deterministic mixed-integer constraints for efficient solving. The main contributions of our method are as follows. 
Firstly, due to the existence of the measurement noise, external disturbance, and modeling errors, we take the initial position uncertainty and model uncertainty into account, which significantly enhances the robustness of our proposed method for motion planning. Secondly, a collision risk can be confined within a predefined range in the proposed approach; thus, this approach provides a risk-bounded optimal trajectory for a stochastic multi-agent system. By specifying the risk of motion planning, the desired level of conservatism in the planning can be stipulated in a meaningful manner. Thirdly, this method is capable of planning trajectories in the velocity space with high computational efficiency, and much smoother trajectories are resulted by taking the integration of velocity vector in the problem formulation. Fourthly, our method also addresses the system dynamics constraint even though the system dynamics has a high complexity, and thus it makes our method more practical in many real applications compared with the velocity obstacle based methods. It is also worthwhile to mention that our method can be easily extended to complement other variants of velocity obstacle based methods, such as reciprocal velocity obstacle, hybrid reciprocal velocity obstacle, etc., depending on the particular requirements in the specific motion planning task.
 
%Simulation results demonstrate that our method could provide smoother trajectories compared with the velocity obstacle method and the traditional collision-avoidance MPC method, which plans trajectories in position space. Besides, the computation efficiency of our proposed method is higher than that of the traditional collision-avoidance MPC method.

%The structure of this paper is as follows. The basic concepts and definitions of the velocity obstacle method are introduced in Section~\ref{section:preliminary}. Section~\ref{section:problem_statement} formulates the distributed MPC problem with probabilistic chance constraints. In Section~\ref{section:problem_transformation}, we transform the probabilistic constraints into disjunctive deterministic constraints. Section~\ref{section:problem_solving} illustrates how we reformulate the disjunctive deterministic constraints as mixed integer constraints and solve the resulting optimization problem. Section~\ref{section:results} shows the simulation results in different scenarios, and conclusion is presented in Section~\ref{section:conclusion}.

\section{Preliminary}
\label{section:preliminary}
\subsection{Notations}
The following notations are used in the remaining text. $\mathbb R^{m\times n}$ denotes the space of real matrices with $m$ rows and $n$ columns, $\mathbb R^{n}$ means the space of $n$-dimensional real column vectors. $A^{\top}$ and $x^{\top}$ denote the transpose of the matrix $A$ and vector $x$, respectively.  
$\mathbb Z_{n}$ and $\mathbb Z_{m}^{n}$ represent the sets of positive integers $\{1,2,\cdots,n\}$ and $\{m,m+1,\cdots,n\}$ with $m<n$, respectively.  $|\mathbb N|$ represents the number of elements in the set $\mathbb N$.
The inner product or dot product of two vector $x \in \mathbb R^{n}, y \in \mathbb R^{n}$ is denoted as $x \cdot y = x^\top y$.
We use $\bigcap$ to denote logical AND and $\bigcup$ to represent logical OR. $\oplus$ denotes the Minkowski sum operation. $I^\complement$ means the complement set of the set $I$, which means $x \notin I \iff x \in I^\complement$ for any element $x$.
$\operatorname{diag}(a_1, a_2, \cdots, a_n)$ is a diagonal matrix with diagonal entries $a_1, a_2, \cdots, a_n$. 
$\|x\|$ denotes the length or magnitude of vector $x$, which is equal to the square root of the dot product of $x$, i.e., $\|x\|=\sqrt{x\cdot x}$. The operator $\|x\|_A$ for vector $x$ is defined as $\|x\|_A = x^\top A x$. 
%The operator $\|X\|$ is the Euclidean norm of matrix $X$. 
%$\operatorname{blockdiag}(X_1, X_2, \cdots, X_n)$ denotes a block diagonal matrix with diagonal entries $X_1, X_2, \cdots, X_n$.
%$\mathbf 0_{n}$ and $\mathbf 0_{(m,n)}$ represent the $n$-dimensional all-zero vector and the $m$-by-$n$ all-zero matrix, respectively. $\mathbf 1_{n}$ and $\mathbf 1_{(m,n)}$ are the $n$-dimensional all-one vector and the $m$-by-$n$ all-one matrix, respectively. $I_{n}$ denotes the $n$-dimensional identity matrix. 

\subsection{Velocity Obstacles}
\label{section:vo intro}
In this section, the basic concept of the velocity obstacle approach is introduced, and the geometric illustration is shown in Fig.~\ref{fig:vo_brief}. The related parameters and variables in this figure are explained below.
\begin{figure}[th]
	\centering
	\includegraphics[width=0.45\textwidth, trim=20 160 100 0,clip]{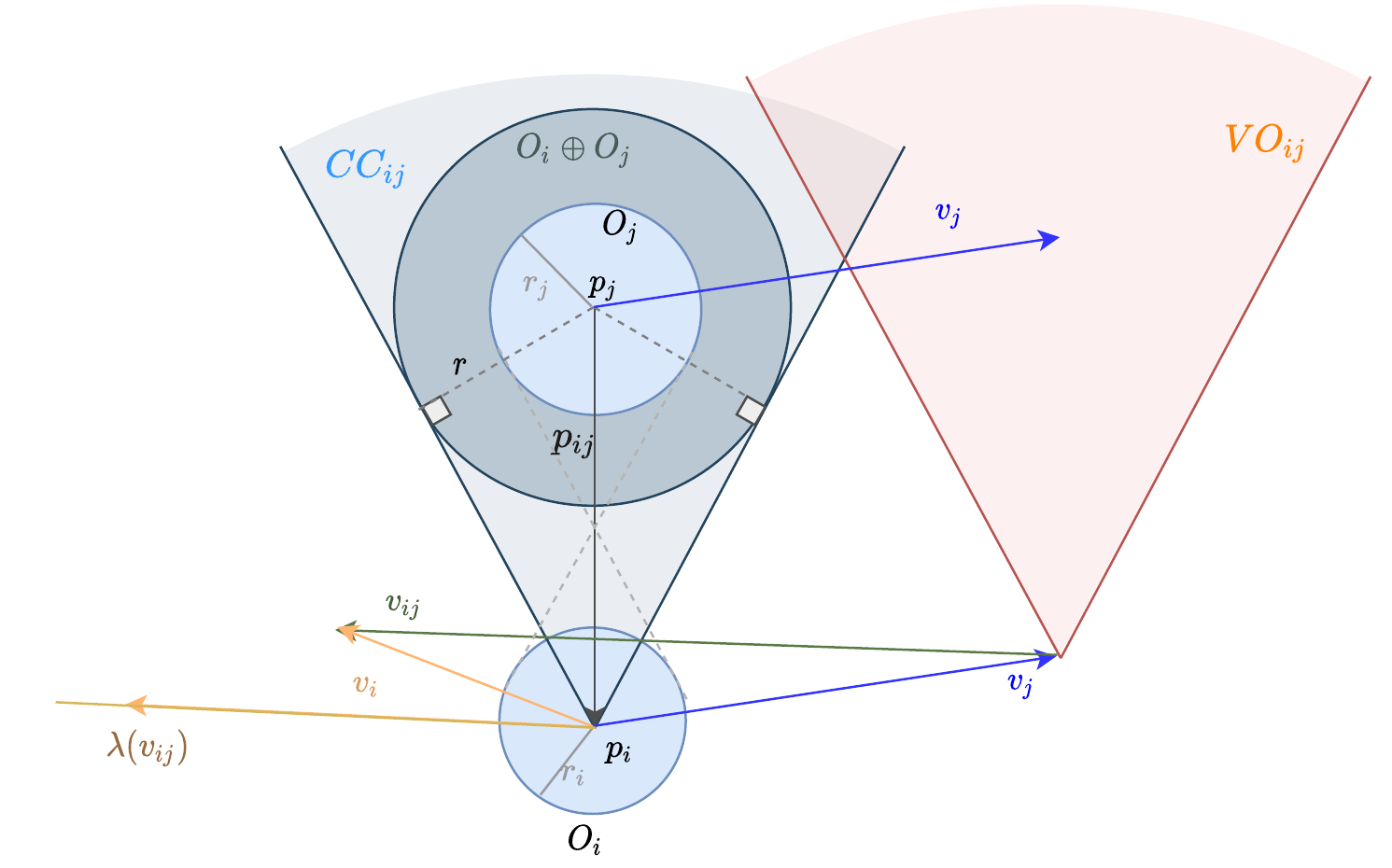}
	\caption{Illustration of the velocity obstacle}
	\label{fig:vo_brief}
\end{figure}
Let all circular moving agents $B_i$ and $B_j$ be centered at $p_i \in \mathbb R^{n}$ and $p_j \in \mathbb R^{n}$ with radius $r_i$ and $r_j$ and velocities $v_i \in \mathbb R^{n}$ and $v_j\in \mathbb R^{n}$, respectively, where 
\begin{IEEEeqnarray}{rCl}
	B_i &=& \{p_i+\mu r_i\;|\; \|\mu\| \leq 1\}\nonumber
	\\ B_j &=& \{p_j+\mu r_j\;|\; \|\mu\| \leq 1\}.
\end{IEEEeqnarray}
Set $p_{ij} = p_i - p_j$, and we have
\begin{IEEEeqnarray}{rCl}
	B_i \oplus B_j  & = & \{m_i+m_j \;|\; m_i\in B_i, m_j\in B_j\}.
\end{IEEEeqnarray}
The relative velocity between the $i$th agent and $j$th agent is $v_{ij}:=v_i-v_j$.
Let $\lambda(v_{ij})$ denote the ray with the direction $v_{ij}$ from the position $p_i$, where
\begin{IEEEeqnarray}{rCl}
	\lambda(v_{ij}) & = & \{p_i + \lambda v_{ij} \;|\; \lambda \geq 0 \}.
\end{IEEEeqnarray}
Then the agent $i$ and the agent $j$ will collide if and only if 
\begin{IEEEeqnarray}{rCl}
	\lambda(v_{ij}) \cap \left(B_i \oplus B_j\right) \neq \emptyset.
\end{IEEEeqnarray}
Therefore, a collision cone $CC_{ij}$ can be represented by
\begin{IEEEeqnarray}{rCl}
	CC_{ij} &=& \left\{ v_{ij}\;\left|\; \left(B_i \oplus B_j\right)  \cap \lambda_{ij}(v_{ij}) \neq \emptyset \right.\right\}.
\end{IEEEeqnarray}
In order to determine whether the velocity $v_i$ of the $i$th agent has a risk of collision with the $j$th agent, the velocity obstacle $VO_{ij}$ is defined as 
\begin{IEEEeqnarray}{rCl}
	VO_{ij} = \left\{ v_i \;|\; (v_i-v_j) \in \mathbb C_{ij} \right\},
\end{IEEEeqnarray}
which is equivalent to
\begin{IEEEeqnarray}{rCl}
	VO_{ij} = v_j \oplus CC_{ij},
\end{IEEEeqnarray}
for the $i$th agent. 
Any velocity $v_i \in VO_{ij}$ will result in a collision with the $j$th agent, as shown in Fig.~\ref{fig:vo_brief}, which means that the feasible velocity of the $i$th agent should be $v_i \in VO_{ij}^\complement$ to avoid the collision with the $j$th agent.

The set of all surrounding moving agents can be considered as moving obstacles for the host agent $i$. Here, we assume the set of all moving agents is $\mathbb N_n = \{1,2,\cdots, n\}$. The neighbor set of the $i$th agent is $\mathbb B_i$, which depends on the communication flow. 
For instance, the neighbors of the $i$th agent can be represented as $\mathbb B_i = \{j | j\in \mathbb N_n, j\neq i\}$, which means that the neighbors of the $i$th agent are the remaining agents except itself. 
%Thus, the composite velocity obstacles of the $i$th agent is
%\begin{IEEEeqnarray}{rCl}
%	VO_i &=& \bigcup_{j\in \mathbb B_i} VO_{ij} \nonumber
%\end{IEEEeqnarray}

%\subsection{Reciprocal Velocity Obstacles}
%\subsection{Hybrid Reciprocal Velocity Obstacles}

\section{Problem Statement}
\label{section:problem_statement}

\subsection{Dynamics Model of Agents}
\label{section:model}
In this paper, we consider the case where uncertainty in the problem can be described probabilistically. We consider three sources of uncertainty:
\begin{itemize}
	\item Disturbances existing in agents are modeled as a Gaussian noise process added to the system dynamics;
	\item Since the system model is not known exactly (the uncertainty in the system model may arise due to modeling errors or linearization), we assume that model uncertainty can be modeled as a Gaussian process added to the system dynamic equation;
	\item The initial positions of agents are specified as probabilistic distributions over possible positions.  In this work, we assume that the initial positions of agents are specified as Gaussian distributions.
%	The distribution would typically be generated with an estimation technique (such as Kalman filtering), with noisy measurements from inertial sensors and global positioning data.
\end{itemize}

Therefore, a nonlinear system model is used to represent the agents' dynamics. In practice, while the low-level dynamics of the system are nonlinear, the controlled plant from the reference positions to real positions can be approximated as a low-order linear system for the purpose of motion planning. 
In this paper, we assume all agents are operating in planar space.
The dynamics of each agent $i \in \mathbb N_n$ can be represented by
\begin{IEEEeqnarray}{rCl}
\label{eq:dynamics}
	\dot x_i = f_i(x_i, u_i) + \omega_i \; \text{or}\; x_{i|k+1} = g_i\left(x_{i|k}, u_{i|k}\right) + \omega_{i|k}, \IEEEeqnarraynumspace
\end{IEEEeqnarray}
where $x_i = \begin{bmatrix} p_i^\top & v_i^\top & \cdots \end{bmatrix}^T \in \mathcal X_i \subset \mathbb R^{n_x}$ denotes the state vector consisting of positions and velocities, $u_i\in \mathcal U_i \subset \mathbb R^{n_u}$ is the control input vector, $n_x$ and $n_u$ represent the dimension of the state variable and the control input variable, respectively, and $\mathcal X_i$ and $\mathcal U_i$ are the state and control space of the $i$th agent, respectively. In the discrete-time model, the subscript $\cdot_{|k}$ means the corresponding variable at the $k$th timestamp for the agents. $f_i$ and $g_i$ are the nonlinear  continuous-time and discrete-time dynamics models of the $i$th agent. 

In this paper, we consider the Gaussian noise $\omega_i\in \mathbb R^{n_x}$ in the model of agents, i.e., $\omega_i \sim \mathcal N(0,W_i)$ with a diagonal covariance matrix $W_i\in \mathbb R^{n_x\times n_x}$. 
Due to the existence of probabilistic distribution of the initial state vector $x_{i|0}$, the initial state vector $x_{i|0} \in \mathcal N(\hat x_{i|0}, P_i)$, where $\hat x_{i|0}$ denotes the mean of the initial state vector, and $P_i$ is the covariance of the Gaussian distribution of the initial state vector.

%In normal cases, the state variable $x_i$ of the $i$th agent includes the position variable $p_i = \begin{bmatrix} p_{x,i} & p_{y,i} \end{bmatrix}^\top$ and the velocity variable $v_i= \begin{bmatrix} v_{x,i} & v_{y,i} \end{bmatrix}^\top$, i.e., $x_i = \begin{bmatrix} p_i^\top & v_i^\top & \cdots \end{bmatrix}^\top$. Thus, the random Gaussian noise in the state vector of agents $\omega_i$ can be divided into $\omega_{p,i}$ and $\omega_{v,i}$ with respect to $p_i$ and $p_j$, respectively, which means the random noise on the position and velocity are independent. Thus, the covariance $W_i$ of the noise denotes $W_i = \operatorname{blockdiag}(W_{p,i},W_{v,i},\cdots)$, where the diagonal matrices $W_{p,i}, W_{v,i}\in \mathbb R^{2\times 2}$.  

In the following, we use the discrete-time dynamics model to illustrate our approach. In detail, the double-integrator dynamics equation is used to represent the dynamics of each agent, which can be described by
\begin{IEEEeqnarray}{rCl}
x_{i|k+1} &=& Ax_{i|k} + B u_{i|k} +\omega_i
\end{IEEEeqnarray}
where
\begin{IEEEeqnarray*}{c}
A = \begin{bmatrix}
	1 & 0 & \tau_s & 0 \\ 0 & 1 & 0 & \tau_s \\ 0 & 0 & 1 & 0 \\ 0 & 0 & 0 & 1
\end{bmatrix},
B = \begin{bmatrix}
	\frac{1}{2}\tau_s^2 & 0 \\ 0 & \frac{1}{2}\tau_s^2 \\ \tau_s & 0 \\ 0 & \tau_s
\end{bmatrix},
\end{IEEEeqnarray*}
and $\tau_s$ denotes the sampling time interval of the agents.

\subsection{Chance Constraints}
In this paper, the agents aim to avoid the static obstacles and other agents simultaneously. 

\subsubsection{Collision Avoidance with Static Obstacles}
We consider polygonal static obstacle regions, which all agents need to avoid effectively. A static obstacle region $O_o, \forall o \in \mathbb N_o$ is shown as the pink polygon in Fig.~\ref{fig:static_obs}, and $\mathbb N_o$ denotes the set of all obstacles the agents need to avoid. Thus, for one static obstacle $O_o$, the feasible region of position variable $p_i$ should be the complement of the region $O_o$, which is represented as the green shadow region in Fig.~\ref{fig:static_obs}. In detail, the feasible region of position variable is defined as a disjunction of linear constraints as follows:
\begin{IEEEeqnarray}{rCl}
O_{o}^\complement &\iff& \bigcup_{n=1}^{n_o} A_{o,n}^\top p_i \geq b_{o,n},	
\end{IEEEeqnarray}
where $A_{o,n}$ is the normal vector of the $n$th linear constraint, $b_{o,n}$ denotes the scalar corresponding to the $n$th linear constraint, the subscript $\cdot_n$ denotes the index of the linear constraint with $n\in \{1,2,\cdots,n_{o}\}$, and $n_{o}$ is the number of linear constraints defining the static obstacle region $O_o$. For example, the static obstacle $O_o$ in Fig.~\ref{fig:static_obs} is a quadrilateral, and thus $n_o=4$ in this case.

\begin{figure}[th]
	\centering
	\includegraphics[width=0.27\textwidth, trim=0 5 0 10,clip]{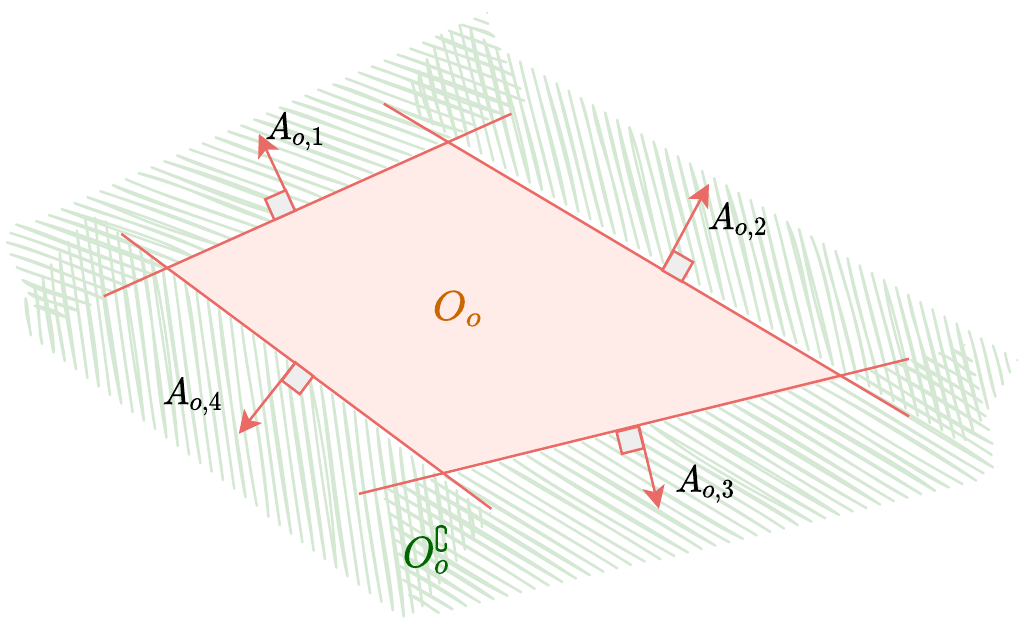}
%	\caption{Feasible region for position vector which can be represented as a conjunction of linear inequality constraints}
	\caption{Feasible region for the position vector (in green color).}
	\label{fig:static_obs}
\end{figure}
For multi-agent systems, the constraints related to the position vector $p_i$ for all agents can be formulated as
\begin{IEEEeqnarray}{rCl}
\label{eq:static_obs_p_i} 
	p_i &\in& \bigcap_{o\in \mathbb N_o}  O_{o}^\complement, \quad \forall i \in \mathbb N_n, 
\end{IEEEeqnarray}
where $\bigcap\limits_{o\in \mathbb A_o} O_o^\complement$ denotes the composite feasible region for position state $p_i$. Here,~\eqref{eq:static_obs_p_i} denotes the position vector $p_i$ of the $i$th agent should belong to the conjunction of all feasible regions $O_{o}, \forall o \in \mathbb N_o$ for all agents.  

\subsubsection{Velocity Obstacles Concerning Other Agents}
Here, we use a velocity obstacle based method to achieve the collision-avoidance task. Thus, we can obtain the conic convex velocity obstacles that the agent should avoid. 
As we mentioned in Section~\ref{section:preliminary},  the velocity obstacle $VO_{ij}$ for the agent $i$ induced by the agent $j$ is represented as the pink conic region as shown in Fig.~\ref{fig:obstacle_demo}. In this figure, the feasible region of the velocity vector $v_i$ for the $i$th agent concerning the $j$th agent, denoted as $VO_{ij}^\complement$, is also represented as the green shadow region. In this figure, it can be observed that $VO_{ij}$ and $VO_{ij}^\complement$ are the conjunction and disjunction of two linear constraints, respectively. In detail, $VO_{ij}$ and $VO_{ij}^\complement$ are defined as follows:
\begin{IEEEeqnarray*}{rCl}
\label{eq:velocity_obstacle_describe}
VO_{ij} &\iff& \bigcap_{m\in \{1,2\}} N_{ij,m}^{\top} v_i \leq c_{ij,m} \IEEEyesnumber \IEEEyessubnumber \label{subeq:VO}\\
VO_{ij}^\complement &\iff& \bigcup_{m\in \{1,2\}} N_{ij,m}^{\top} v_i \geq c_{ij,m}, \IEEEnonumber \IEEEyessubnumber \label{subeq:VO_complement}
\end{IEEEeqnarray*}
where $N_{ij,m}$ denotes the normal vector of the two linear constraints which form the conic velocity obstacle $VO_{ij}$ for the agent $i$ induced by the agent $j$, $c_{ij,m}$ is the scalar corresponding to the $m$th linear constraint of $VO_{ij}$, and the subscript $\cdot_{m}$ denotes the index of the corresponding linear constraint of $VO_{ij}$ with $m\in \{1,2\}$. 
\begin{figure}[th]
	\centering
	\includegraphics[width=0.26\textwidth, trim=40 55 60 0,clip]{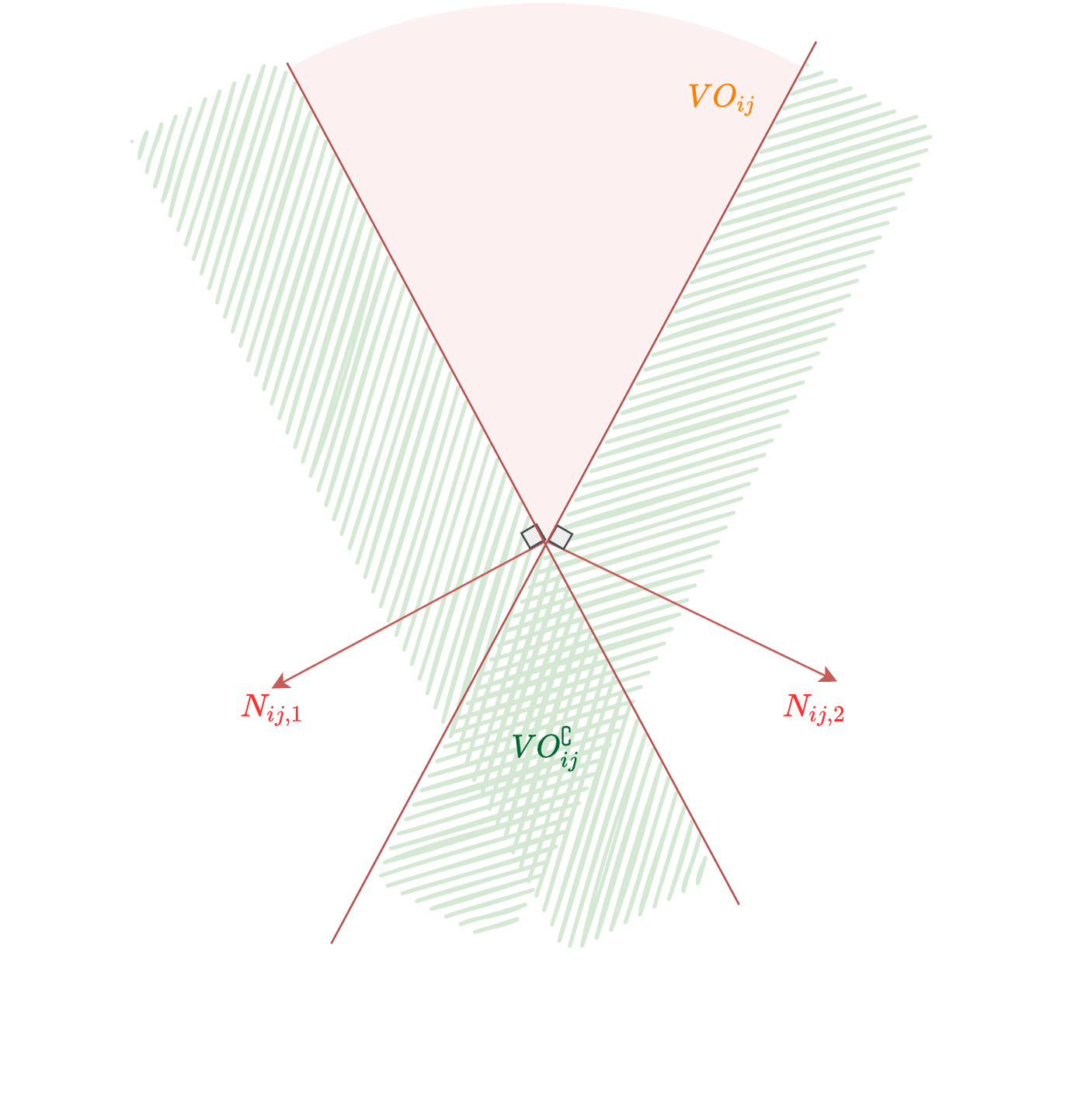}
	\caption{Feasible region for the velocity vector (in green color).}
	\label{fig:obstacle_demo}
\end{figure}

In multi-agent systems, we can obtain the composite velocity obstacles for the $i$th agent. Therefore, the collision-avoidance constraints can be formulated as
\begin{IEEEeqnarray}{rCl}
\label{eq:velocity_obstacle_for_v_i}
	           v_i \notin \bigcup\limits_{j\in \mathbb B_i} VO_{ij} 
	&\iff& v_i \in \bigcap\limits_{j\in \mathbb B_i} VO_{ij}^\complement, 
\end{IEEEeqnarray}
where $\bigcup\limits_{j\in \mathbb B_{i}} VO_{ij}$ denotes the composite velocity obstacles of the $i$th agent, $\bigcap\limits_{j\in \mathbb B_{i}} VO_{ij}^\complement$ represents the feasible region of velocity state $v_i$, and $\mathbb B_{i}$ means the set of neighbors of the $i$th agent. Here,~\eqref{eq:velocity_obstacle_for_v_i} represents the velocity vector $v_i$ of the $i$th agent should not belong to the composite velocity obstacles, which means $v_i$ should be inside the complement of the composite velocity obstacles.

\subsubsection{Chance Constraints of Static Obstacles and Velocity Obstacles }
Due to the existence of Gaussian noise in the state vector of agents, the predicted state vector is subject to the Gaussian distributions, i.e., $x_{i|k} = \hat x_{i|k} + \omega_i \in \mathcal N(\hat x_{i|k}, \Sigma_{i|k})$, where $\hat x_{i|k}$ and $\Sigma_{i|k}$ denote the mean and covariance of state vector $x_{i|k}$ at the timestamp $k$ in the prediction horizon. As for the position and velocity vector $p_{i|k}$ and $v_{i|k}$, it is straightforward that $p_{i|k} = L_p x_{i|k}$ and $v_{i|k} = L_v x_{i|k}$, where $L_p$ and $L_v$ are used to extract the position vector $p_i$ and velocity vector $v_i$ from the state vector $x_i$, respectively. 

Hence, the collision-avoidance constraints can be described in a probabilistic manner. For the $i$th agent, the chance constraints can be expressed as
\begin{IEEEeqnarray}{l}
\label{eq:chance_constraint}
	\operatorname{Pr}\left(p_{i|k} \in \bigcap\limits_{o\in \mathbb N_o} O_{o}^\complement \right) \ge  1- \epsilon_i \IEEEyesnumber\IEEEyessubnumber
	\label{subeq:chance_constraint_pos} \\
	\operatorname{Pr}\left(v_{i|k} \in \bigcap\limits_{j\in\mathbb B_i} VO_{ij|k}^{\complement}\right) \ge  1- \delta_i,  \IEEEnonumber\IEEEyessubnumber
	\label{subeq:chance_constraint_velo} 
\end{IEEEeqnarray}
where $\epsilon_i$ and $\delta_i$ are the collision risk bound of the position vector $p_i$ and velocity vector $v_i$ for the $i$th agent, respectively. \eqref{subeq:chance_constraint_pos} means the collision risk of $p_i$ which belongs to the composite of the feasible regions is no less than $1-\epsilon_i$, and \eqref{subeq:chance_constraint_velo} represents the collision risk between the $i$th agent and its neighbors should not be less than $1- \delta_i$. 

\subsection{Distributed Collision-Avoidance Problem}
Then, we can formulate the motion planning task for a stochastic multi-agent system as a distributed discrete-time chance-constrained optimization problem for each agent $i \in \mathbb N_n$ on $N$ prediction timestamps with a sampling time $\tau_s$. The objective is to determine the optimal trajectories and control inputs for all agents, such that these agents can move from their initial states to the target states while maintaining the collision risk below the given risk bound. 
For each ego agent $i$, given the position of the other agents $p_j, \forall j \in \mathbb B_i$, the initial probabilistic state $x_i^0 \in \mathcal N(\hat x_{i|0}, P_{i|0})$, the reference state vector $x_{\text{ref},i}$ and the collision risk bounds $\delta_i$ and $\epsilon_i$, the distributed  chance-constrained optimization problem for each agent is defined as
\begin{IEEEeqnarray*}{cCl}
	\label{eq:ori_problem}
	\min\limits_{\substack{x_{i|1:N}\\ u_{i|0:N-1}}} & \quad & \sum_{k=0}^{N-1} 
	J_{i|k}(x_{i|k},u_{i|k}) + J_{i|N}(x_{i|N}) 
			\IEEEyesnumber \IEEEyessubnumber \label{subeq:obj1}\\
	\operatorname{s.t.} & \quad & x_{i|k} = g_i\left(x_{i|k-1}, u_{i|k-1}\right) + \omega_{i|k-1} 
			\IEEEnonumber \IEEEyessubnumber \label{subeq:dyn1}\\
	&\quad& \omega_{i|k-1} \sim \mathcal N\left(0, W_i\right) 
			\IEEEnonumber \IEEEyessubnumber \label{subeq:noise1}\\
	&\quad& x_{i|0}\sim\mathcal N(\hat x_{i|0}, P_0) 
			\IEEEnonumber \IEEEyessubnumber \label{subeq:init1}\\
	&\quad& \operatorname{Pr}\left(p_{i|k} \in \bigcap\limits_{o\in \mathbb N_o} O_{o}^\complement \right) \ge  1-\epsilon_i 
			\IEEEnonumber \IEEEyessubnumber \label{subeq:static_obs1}\\
	&\quad& \operatorname{Pr}\left(v_{i|k} \in \bigcap\limits_{j\in \mathbb B_i} VO_{ij|k}^\complement\right) \ge  1-\delta_i 
			\IEEEnonumber \IEEEyessubnumber \label{subeq:vo1}\\
	&\quad& x_{\min,i} \leq x_{i|k} \leq x_{\max,i} 
			\IEEEnonumber \IEEEyessubnumber \label{subeq:xlim1}\\
	&\quad& u_{\min,i}   \leq u_{i|k-1}   \leq u_{\max,i} 
			\IEEEnonumber \IEEEyessubnumber \label{subeq:ulim1}\\
	&\quad& \forall k\in \mathbb Z_1^{N}, 
\end{IEEEeqnarray*}
where \eqref{subeq:obj1} contains two cost terms of states and control inputs, i.e., the cost-to-go term $J_{i|k}(x_{i|k},u_{i|k}) = \left\|x_{i|k}-x_{\mathrm{ref},i|k} \right\|_{Q_{i}} + \left\|u_{i|k}\right\|_{R_{i}}$, and the terminal cost term $J_{i|N}(x_{i|N}) = \left\|x_{i|k}-x_{\mathrm{ref},i|k} \right\|_{Q_{i|N}}$, $Q_i$ and $R_i$ are weighting matrices to penalize the deviation from the reference states and the unnecessary large control inputs, respectively. \eqref{subeq:xlim1} and~\eqref{subeq:ulim1} represent the limitation constraints of $x_i$ and $u_i$, and $x_{\min,i}, x_{\max,i}, u_{\min,i}, u_{\max,i}$ are the lower bound and upper bound of state vector and control input vector for the $i$th agent.

%\begin{remark}
%	In the problem formulation~\eqref{eq:ori_problem}, the collision-avoidance constraints are based on the $(k+1)$th timestamp. But in the next mathematical transformation, our discussion is based on the $k$th timestamp for simplicity.
%\end{remark} 

\begin{remark}
In Problem~\eqref{eq:ori_problem}, the nonconvex chance constraints~\eqref{subeq:static_obs1} and~\eqref{subeq:vo1} are difficult to be addressed. The evaluation of a chance constraint needs to compute the integral of a multivariate Gaussian distribution on a nonconvex region; thus, it is hard to obtain a closed-form solution to the chance-constrained problem. Also, some approximation strategies like sampling may suffer from computational efficiency and raise approximation errors. Besides, the value of the integral is nonconvex for the decision variables, due to the existence of disjunctions in $\bigcup_{j\in \mathbb B_i} VO_{ij}$ and $\bigcup_{o\in \mathbb N_o} O_o$. Consequently, the optimization problem is generally intractable. 
\end{remark}

\section{Problem Transformation}
\label{section:problem_transformation}
In Problem~\eqref{eq:ori_problem}, the collision-avoidance constraints regarding the static obstacles $O_o, \forall o\in \mathbb N_o$ have the similar property with the collision-avoidance constraints of velocity obstacles $VO_{ij}$ for other agents. Thus, in the following description, we use the transformation of the chance constraint~\eqref{subeq:vo1} as an example to demonstrate how we transform the probabilistic chance constraints as disjunctive deterministic constraints. 
As for~\eqref{subeq:static_obs1}, its transformation is much simpler than that of~\eqref{subeq:vo1}, because $VO_{ij|k}^\complement$ may be time-variant during the prediction horizon; however, $O_o^\complement$ is constant. 
The key difference between the transformation of the two constraints~\eqref{subeq:vo1} and~\eqref{subeq:static_obs1} lies in the difference in the number of linear constraints that formulate the velocity obstacles and static obstacles, respectively. 
In detail, the velocity obstacles are conic regions consisting of two linear bounds. The static obstacles are polyhedral regions formed by different numbers of linear bounds that depend on their specific shapes.
Therefore, the distributed optimization problem for the demonstration that omits the chance constraints of static obstacles~\eqref{subeq:static_obs1} is defined as
\begin{IEEEeqnarray*}{cCl}
	\label{eq:demo_problem}
	\min\limits_{\substack{x_{i|1:N}\\ u_{i|0:N-1}}} & \quad & \sum_{k=0}^{N-1} 
	J_{i|k}(x_{i|k},u_{i|k}) + J_{i|N}(x_{i|N}) \\
	\operatorname{s.t.} & \quad & \eqref{subeq:dyn1},~\eqref{subeq:noise1},~\eqref{subeq:init1},~\eqref{subeq:vo1},~\eqref{subeq:xlim1},~\eqref{subeq:ulim1} \\
	&\quad& \forall k\in \mathbb Z_1^{N}.  \yesnumber
\end{IEEEeqnarray*}
Since the difficulty in solving the optimization problem~\eqref{eq:demo_problem} lies in dealing with the probabilistic chance constraint~\eqref{subeq:vo1}, we transform nonconvex probabilistic chance constraint into a set of individual deterministic chance constraints.

\subsection{Analysis on Velocity Obstacle}
The chance constraint of velocity obstacles~\eqref{subeq:vo1} means that the risk of collision with all neighbors for the $i$th agent should be less than $\delta_i$; however, it is hard to handle it. 
Thus, we start to analyze one velocity obstacle $VO_{ij|k}$. All of the pertinent variables and their relationships are demonstrated in Fig.~\ref{fig:vo_analysis}. 
The green shadow area in this figure represents the feasible region of $v_i$ for the $i$th agent to avoid the collision with neighbors $B_i$.

\begin{figure}[th]
	\centering
	\includegraphics[width=0.45\textwidth, trim=30 70 0 0,clip]{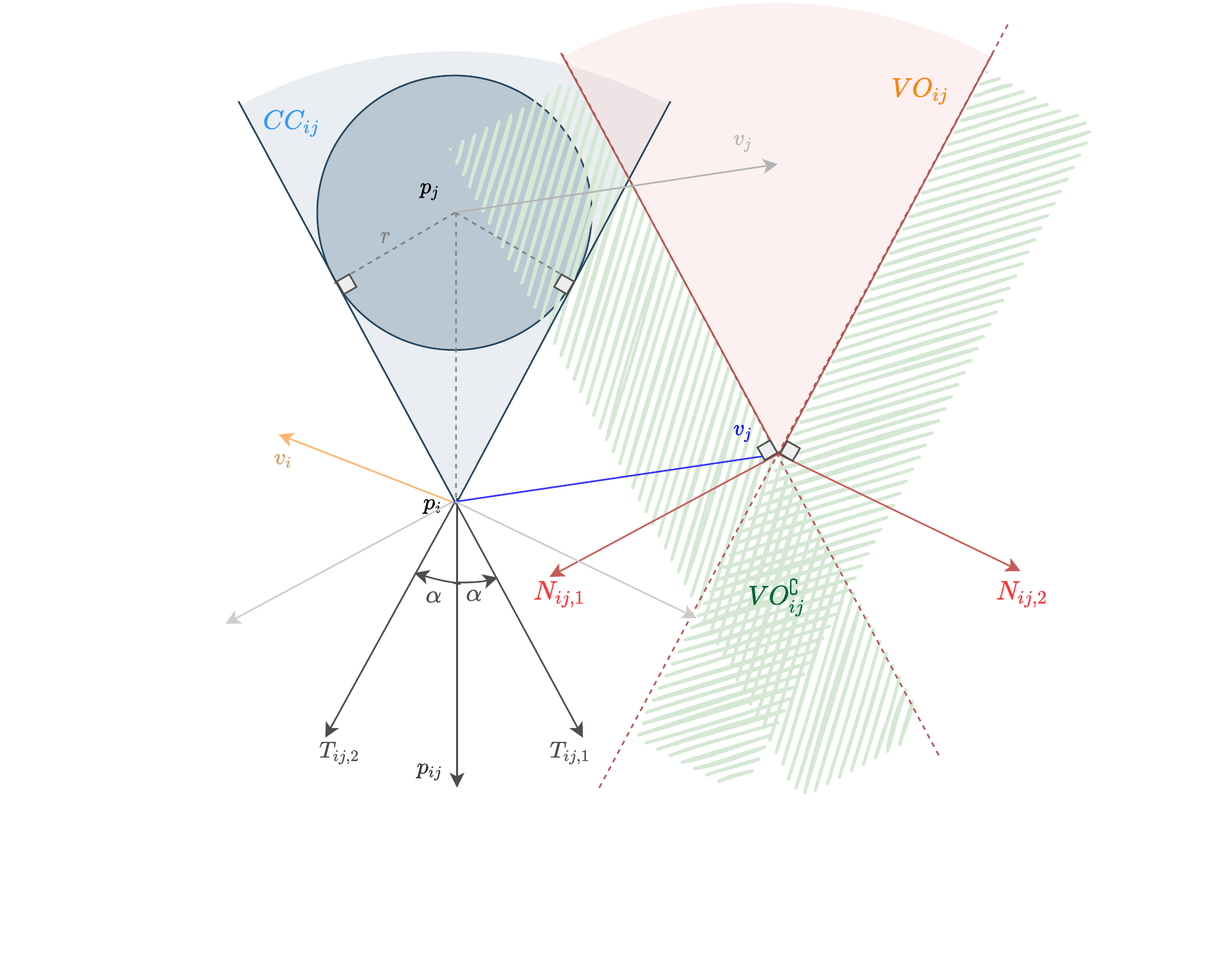}
	\caption{Feasible region analysis based on the velocity obstacle $VO_{ij}$}
	\label{fig:vo_analysis}
\end{figure}

Define the vector $p_{ij|k}:=p_{i|k}-p_{j|k} \in \mathbb R^{n_p}$, where $n_p$ is the dimension of the position vector, and the radius of the circle in Fig.~\ref{fig:vo_analysis} is given by $r=r_i+r_j$. 
Then, the two angles between $p_{ij|k}$ and the two boundary lines of the collision cone $CC_{ij|k}$ are the angle $\alpha_{ij|k}$ and $-\alpha_{ij|k}$ respectively, where $\alpha_{ij|k} \in \left[-\frac{\pi}{2}, \frac{\pi}{2}\right]$, as shown in Fig.~\ref{fig:vo_analysis}. 
It is straightforward that $\sin\alpha_{ij|k} = \frac{r}{\|p_{ij|k}\|}$, and $\cos \alpha_{ij|k} = \sqrt{1- \sin^2 \alpha_{ij|k}} = \frac{\sqrt{\|p_{ij|k}\|^2-r^2}}{\|p_{ij|k}\|}$. 
Thus, we can obtain the two tangent vectors $T_{ij,1|k}$ and $T_{ij,2|k}$ of the $CC_{ij|k}$, where
\begin{IEEEeqnarray*}{C}
	T_{ij,1|k} =  R( \alpha_{ij|k}) p_{ij|k}, \quad T_{ij,2|k} =  R(-\alpha_{ij|k}) p_{ij|k}, \IEEEeqnarraynumspace \yesnumber
\end{IEEEeqnarray*}
where $R(\theta)$ is a rotation matrix to rotate a vector with the degree $\theta$ in Euclidean space counterclockwise. Here, the tangent vectors $T_{ij,1|k}$ and $T_{ij,2|k}$ are obtained by rotating the vector $p_{ij|k}$ counterclockwise and clockwise with the degree of $\alpha_{ij|k}$, respectively, as shown in Fig.~\ref{fig:vo_analysis}.

In addition, according to Fig.~\ref{fig:vo_analysis}, we can also compute the normal vector $N_{ij,1|k}$ by rotating the tangent vectors $T_{ij,1|k}$ clockwise with the degree $\frac{\pi}{2}$. 
Meanwhile, another normal vector $N_{ij,2|k}$ can be derived by rotating the tangent vectors $T_{ij,2|k}$ counterclockwise with the degree $\frac{\pi}{2}$. Thus, $N_{ij,1|k}$ and $N_{ij,2|k}$ are defined by
\begin{IEEEeqnarray*}{rCl}
	N_{ij,1|k} = R\left(-\frac{\pi}{2}\right)T_{ij,1|k}, \quad
	N_{ij,2|k} = R\left( \frac{\pi}{2}\right)T_{ij,2|k}. \IEEEeqnarraynumspace \yesnumber
\end{IEEEeqnarray*}
Besides, the velocity obstacle $VO_{ij|k}$ can be obtained by translating the collision cone $CC_{ij|k}$ via the Minkowski sum $VO_{ij|k} = CC_{ij|k} \oplus v_{j|k}$. Therefore, the directions of two normal vectors of $VO_{ij}$ are the same as the directions of two normal vectors of $CC_{ij}$.

The relationship between the velocity $v_{i|k}$ and the velocity obstacle $VO_{ij|k}$ with respect to the $j$th agent is shown in Fig.~\ref{fig:vo_analysis}. 
As aforementioned, 
%the velocity obstacle $VO_{ij|k}$ is defined as a conic region containing the set of velocity vector $v_{i|k}$ for the $i$th agent that will lead to collision with the $j$th agent at the timestamp $k$. Thus,
the velocity $v_{i|k}$ satisfying the constraint $v_{i|k} \notin \bigcup_{j\in \mathbb B_i} VO_{ij|k}$ can result in a collision-free trajectory for the $i$th agent. Hence, the feasible region of $v_{i|k}$ is the complement of the velocity obstacle $VO_{ij|k}$ regarding the $j$ agent and can be represented as the green shadow region in Fig.~\ref{fig:vo_analysis}. Hence, we can use the two linear constraints to represent the feasible region $VO_{ij|k}^\complement$ that is represented as~\eqref{eq:velocity_obstacle_describe}, in which the two scalars $c_{ij,1|k}$ and $c_{ij,2|k}$ of the two linear constraints can be given by
\begin{IEEEeqnarray*}{rCl}
c_{ij,1|k} = N_{ij,1|k} \cdot v_{j|k}, \quad
c_{ij,2|k} = N_{ij,2|k} \cdot v_{j|k}. \IEEEeqnarraynumspace \yesnumber
\end{IEEEeqnarray*}

Therefore, we can rewrite the collision-avoidance chance constraint in velocity space, i.e., ~\eqref{subeq:chance_constraint_velo}, as 
\begin{IEEEeqnarray}{rCl}
	\label{eq:prob_disjunct_constraints}
	\operatorname{Pr} \left( \bigcap\limits_{j\in \mathbb B_i} \bigcup\limits_{m\in\{1,2\}} \left( N_{ij,m|k}^{\top} v_{i|k} \geq c_{ij,m} \right) \right)	& \geq & 1-\delta_i.
\end{IEEEeqnarray}

\begin{remark}
In addition to the probabilistic property of~\eqref{eq:prob_disjunct_constraints}, another difficulty in handling~\eqref{eq:prob_disjunct_constraints} is that the disjunctions in~\eqref{eq:prob_disjunct_constraints} render the feasible region nonconvex. 
\end{remark}

%\begin{remark}
%	Generally, nonconvex programs are intractable. However, in this case, the overall optimization problem can be decomposed into a finite number of subproblems that are convex programs. Note that convex programs can be solved to attain the global optimality with analytic bounds on the number of iterations required for convergence~\cite{ye2011interior,boyd2004convex}. The number of convex subproblems is exponential in the number of disjunctions. However, for many practical problems, a branch-and-bound approach has been shown to find the globally optimal solution while solving only a small subset of the subproblems required in the theoretical worst case~\cite{scholz2011deterministic,norkin1998branch}. 
%\end{remark}

\subsection{Propagation of Linear Gaussian Distribution}
As mentioned in Section~\ref{section:model}, we assume that the initial states subject to a multivariate Gaussian distribution $x_{i|0} \in \mathcal N(\hat x_{i|0}, P_{i|0})$. Besides, we also assume that the agent local dynamics are linear and there are Gaussian noises corresponding to model uncertainty and disturbances. Thus, the distribution of the future state is also subject to a Gaussian distribution, i.e., $\operatorname{Pr}(x_{i|k}|u_{i|0}, u_{i|1}, \cdots, u_{i|k-1})\sim \mathcal N(\hat x_{i|k}, \Sigma_{i|k})$. Based on the recursive linear system dynamics, the distribution of the future state can be calculated by
\begin{IEEEeqnarray*}{rCl}
	\hat x_{i|k} &=& \sum_{l=0}^{k-1} A_i^{k-l-1} B u_{i|l} + A_i^{k}\hat x_{i|0} \IEEEyesnumber \IEEEyessubnumber \label{subeq:mean}\\
	\Sigma_{i|k} &=&  \sum_{l=0}^{k-1} A_i^l W_i (A_i^{\top})^{l} + A_i^{k} P_{i|0} ({A_i^\top})^{k}. \IEEEnonumber \IEEEyessubnumber \label{subeq:covariance}
\end{IEEEeqnarray*}
\begin{remark}
	According to \eqref{subeq:mean}, the mean of the future state $\hat x_{i|k}$ at the timestamp $k$ in the prediction horizon is linear in the control inputs $u_{i|0}, u_{i|1}, \cdots, u_{i|{k-1}}$. Besides, the covariance of the future state $\Sigma_{i|k}$ is not a function of the control inputs $u_{i|0}, u_{i|1}, \cdots, u_{i|{k-1}}$, which means that the covariance during prediction is known for a given initial state covariance $P_{i|0}$ and given noise covariance $W_i$.
\end{remark}

\subsection{Risk Bound of Disjunctive Constraints}
In order to determine the probabilistic chance constraints in a deterministic manner, the following lemmas are introduced.

\begin{lemma}
\label{lemma:single_variable}
	A chance constraint on a single-variate Gaussian random variable $x\sim \mathcal N(\mu,\sigma^2)$ with fixed variance but varying mean can be translated into a deterministic constraint as
	\begin{IEEEeqnarray}{rCl}
	\operatorname{Pr}(x<0) \leq \delta &\iff & u\geq \eta, 	
	\end{IEEEeqnarray}
	where $\delta$ is the predefined risk bound, and $\eta$ is given by
	\begin{IEEEeqnarray*}{rCl}
	\eta &=& \sqrt{2}\sigma \operatorname{erf}^{-1}(1-2\delta)	,
	\end{IEEEeqnarray*}
	and the error function $\operatorname{erf}$ is defined as
	\begin{IEEEeqnarray*}{rCl}
	\operatorname{erf}(z) &=& \frac{2}{\sqrt{\pi}} \int_0^z e^{-t^2} dt.
	\end{IEEEeqnarray*}
\end{lemma}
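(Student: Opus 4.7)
The plan is a direct CDF manipulation: express the probability $\operatorname{Pr}(x<0)$ in closed form using the standard Gaussian cumulative distribution, invert it through the error function, and exploit monotonicity.

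First I would standardize the random variable. Let $z=(x-\mu)/\sigma$, so that $z\sim\mathcal{N}(0,1)$ and
\begin{equation*}
\operatorname{Pr}(x<0)=\operatorname{Pr}\!\left(z<-\tfrac{\mu}{\sigma}\right)=\Phi\!\left(-\tfrac{\mu}{\sigma}\right),
\end{equation*}
where $\Phi$ is the standard normal CDF. Using the standard identity $\Phi(y)=\tfrac{1}{2}\bigl(1+\operatorname{erf}(y/\sqrt{2})\bigr)$ and the oddness of $\operatorname{erf}$, this becomes
\begin{equation*}
\operatorname{Pr}(x<0)=\tfrac{1}{2}\Bigl(1-\operatorname{erf}\!\bigl(\tfrac{\mu}{\sqrt{2}\sigma}\bigr)\Bigr).
\end{equation*}

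Next I would substitute this expression into the chance constraint $\operatorname{Pr}(x<0)\leq\delta$ and solve. Rearranging gives
\begin{equation*}
\operatorname{erf}\!\bigl(\tfrac{\mu}{\sqrt{2}\sigma}\bigr)\geq 1-2\delta.
\end{equation*}
Since $\operatorname{erf}:\mathbb{R}\to(-1,1)$ is strictly increasing, its inverse $\operatorname{erf}^{-1}$ exists and is also strictly increasing on $(-1,1)$. Provided $\delta\in(0,1)$ so that $1-2\delta\in(-1,1)$, I may apply $\operatorname{erf}^{-1}$ to both sides and preserve the inequality, yielding $\mu/(\sqrt{2}\sigma)\geq\operatorname{erf}^{-1}(1-2\delta)$, i.e., $\mu\geq\sqrt{2}\sigma\,\operatorname{erf}^{-1}(1-2\delta)=\eta$. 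The reverse direction follows by running the same chain of equivalences backward, establishing the iff.

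There is no real obstacle here; the only subtlety worth noting is that the equivalence relies on the strict monotonicity of $\operatorname{erf}^{-1}$ and on $\delta$ lying in the admissible range $(0,1)$, which is implicit in calling $\delta$ a probabilistic risk bound. I would also mention that the statement as written uses ``$u$'' on the right-hand side, which I interpret as a typo for the mean $\mu$, since the variance $\sigma^2$ is held fixed and only the mean is variable.
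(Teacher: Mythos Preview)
Your argument is correct and is the standard derivation. The paper, however, states this lemma without proof---it is presented as a known fact preceding Lemma~\ref{lemma:multi_variable} and Theorem~\ref{theorem:theorem_risk_bound}, so there is no ``paper's own proof'' to compare against. Your CDF-then-invert approach is exactly what one would expect, and your observation that ``$u$'' should read ``$\mu$'' is also right.
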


\begin{lemma} 
	\label{lemma:multi_variable}
	Given any matrix $A$ and scalar $b$, for a multivariate random variable $ X(t)$ corresponding to the mean $\mu(t)$ and covariance $\Sigma(t)$, the chance constraint related to the predefined allowable collision risk bound $\delta$ is defined as
	\begin{IEEEeqnarray*}{rCl}
	\label{eq:multi_variable}
	{\operatorname{Pr}\left({A}^\top {X}(t) < b \right)}  \leq \delta,   \yesnumber
	\end{IEEEeqnarray*}
	which is equivalent to a deterministic linear constraint
	\begin{IEEEeqnarray*}{rCl}
	\label{eq:chance_constraint_2}
	{{A}^\top \mu(t) - b} \ge \eta(\delta),  \yesnumber
	\end{IEEEeqnarray*}
	where 
	\begin{IEEEeqnarray*}{c}
		\eta(\delta) = {\sqrt{2{A}^\top \Sigma(t) {A}}\operatorname{erf}^{-1}(1-2\delta)}.
	\end{IEEEeqnarray*}
\end{lemma}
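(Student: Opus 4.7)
The plan is to reduce Lemma~\ref{lemma:multi_variable} to Lemma~\ref{lemma:single_variable} by exploiting the fact (established in Section~\ref{section:model} and the preceding subsection on linear Gaussian propagation) that the state $X(t)$ is multivariate Gaussian, since the lemma is applied in the context of propagating $x_{i|k}$ through the linear dynamics with Gaussian noise and Gaussian initial conditions. Under this assumption, any affine functional $A^{\top} X(t) - b$ is itself a scalar Gaussian random variable, which brings us immediately into the setting of the preceding lemma.

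First I would define the auxiliary scalar random variable $Y(t) := A^{\top} X(t) - b$. Since $X(t) \sim \mathcal N(\mu(t), \Sigma(t))$, standard properties of multivariate Gaussians give $Y(t) \sim \mathcal N(\mu_Y(t), \sigma_Y^2(t))$, with mean $\mu_Y(t) = A^{\top}\mu(t) - b$ and variance $\sigma_Y^2(t) = A^{\top}\Sigma(t)A$. Note that the variance does not involve the decision-related quantity $\mu(t)$; this is exactly the setting of Lemma~\ref{lemma:single_variable}, where the variance is fixed and only the mean varies. I would then rewrite the chance constraint $\Pr(A^{\top} X(t) < b) \leq \delta$ in the equivalent form $\Pr(Y(t) < 0) \leq \delta$, so that Lemma~\ref{lemma:single_variable} applies directly with $x \leftarrow Y(t)$, $\mu \leftarrow \mu_Y(t)$, $\sigma^2 \leftarrow \sigma_Y^2(t)$.

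Applying Lemma~\ref{lemma:single_variable} then yields the equivalence
\begin{IEEEeqnarray*}{rCl}
\Pr(Y(t)<0) \leq \delta &\iff& \mu_Y(t) \geq \sqrt{2}\,\sigma_Y(t)\,\operatorname{erf}^{-1}(1-2\delta).
\end{IEEEeqnarray*}
Substituting back $\mu_Y(t) = A^{\top}\mu(t)-b$ and $\sigma_Y(t) = \sqrt{A^{\top}\Sigma(t)A}$ gives $A^{\top}\mu(t) - b \geq \sqrt{2\,A^{\top}\Sigma(t)A}\,\operatorname{erf}^{-1}(1-2\delta) = \eta(\delta)$, which is exactly~\eqref{eq:chance_constraint_2}. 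The chain of equivalences is reversible, so both implications hold.

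The main obstacle, such as it is, lies less in the algebra than in making the Gaussianity hypothesis on $X(t)$ explicit: the lemma statement as written refers only to a "multivariate random variable" with a given mean and covariance, whereas the equivalence genuinely requires the distribution to be Gaussian (for a general distribution with the same first two moments, one would only obtain a one-sided bound via, e.g., a Cantelli-type inequality, and the $\operatorname{erf}^{-1}$ term would become a looser constant). I would therefore open the proof by invoking the standing assumption from Section~\ref{section:model}, together with the linear propagation result~\eqref{subeq:mean}--\eqref{subeq:covariance}, to justify $X(t) \sim \mathcal N(\mu(t),\Sigma(t))$ before carrying out the reduction above. A brief remark noting that $A^{\top}\Sigma(t)A > 0$ (so that $\sigma_Y(t)$ is well defined) completes the argument; if $A^{\top}\Sigma(t)A = 0$, the random variable is degenerate and the constraint reduces to the trivial deterministic inequality $A^{\top}\mu(t) \geq b$.
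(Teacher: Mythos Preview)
Your argument is correct and is the standard reduction: form the scalar $Y(t)=A^\top X(t)-b$, observe that under the Gaussianity assumption it is $\mathcal N(A^\top\mu(t)-b,\,A^\top\Sigma(t)A)$, and invoke Lemma~\ref{lemma:single_variable}. Your caveat that the equivalence genuinely requires Gaussianity (not merely knowledge of the first two moments) is well taken and worth stating explicitly.

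Note, however, that the paper does not actually supply a proof of Lemma~\ref{lemma:multi_variable}; it is stated without proof, immediately preceding Theorem~\ref{theorem:theorem_risk_bound}, and is treated as a known auxiliary result alongside Lemma~\ref{lemma:single_variable}. So there is no paper proof to compare against; your derivation simply fills in the omitted justification, and it does so in the natural way.
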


\begin{theorem}
\label{theorem:theorem_risk_bound}
Given $x_{i|k}\sim \mathcal N(\hat x_{i|k}, \Sigma_{i|k})$ and $v_{i|k} = L_v x_{i|k}$, the probabilistic chance constraint~\eqref{eq:prob_disjunct_constraints} holds only if 
\begin{IEEEeqnarray*}{c}
\label{eq:condition}
\bigcap_{j\in \mathbb B_i} \bigcup_{m\in \{1,2\}} \left( N_{ij,m|k}^\top L_v\hat x_{i|k} - c_{ij,m|k} \geq g_{ij,m|k}(\delta_{ij,m|k}) \right) \IEEEeqnarraynumspace \IEEEyesnumber \IEEEyessubnumber \label{eq:condition_1} \\
\sum_{j\in\mathbb B_i}\sum_{m\in\{1,2\}} \delta_{ij,m|k} \leq \delta_i, \IEEEeqnarraynumspace \IEEEnonumber \IEEEyessubnumber \label{eq:condition_2}
\end{IEEEeqnarray*}
where 
\begin{IEEEeqnarray*}{rCl}
	 g_{ij,m|k}(\delta) &=& \sqrt{2N_{ij,m|k}^{\top}L_v\Sigma_{i|k}L_v^\top N_{ij,m|k}} \operatorname{erf}^{-1}(1-2\delta).
\end{IEEEeqnarray*}
\end{theorem}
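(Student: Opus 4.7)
The plan is to establish the necessity (``only if'') direction — that if the chance constraint~\eqref{eq:prob_disjunct_constraints} is satisfied then there exist allocations $\{\delta_{ij,m|k}\}$ making both \eqref{eq:condition_1} and \eqref{eq:condition_2} hold — by extracting a candidate allocation from the scalar Gaussian tails of the velocity functionals and then verifying the cumulative budget. Working directly in probability space before passing to the deterministic reformulation is essential, because it retains the joint-event information carried by the chance constraint that would otherwise be destroyed by premature scalarization.

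The constructive step is straightforward. Since $v_{i|k}=L_v x_{i|k}$ is jointly Gaussian with mean $L_v\hat x_{i|k}$ and covariance $L_v\Sigma_{i|k}L_v^\top$, each scalar projection $N_{ij,m|k}^\top v_{i|k}$ is univariate Gaussian with mean $N_{ij,m|k}^\top L_v\hat x_{i|k}$ and variance $N_{ij,m|k}^\top L_v\Sigma_{i|k}L_v^\top N_{ij,m|k}$. Writing $A_{jm|k}=\{N_{ij,m|k}^\top v_{i|k}\ge c_{ij,m|k}\}$, I would define the candidate allocation $\delta_{ij,m|k}^\star = \operatorname{Pr}(A_{jm|k}^{\complement})$. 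Applying Lemma~\ref{lemma:multi_variable} to each scalar tail then translates the probabilistic bound into the corresponding deterministic inequality of \eqref{eq:condition_1} with equality, so the inner disjunction in \eqref{eq:condition_1} is trivially satisfied at every neighbor $j$ — indeed, at every index $m$ — under this candidate choice.

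The main obstacle is verifying the risk-budget side \eqref{eq:condition_2}: controlling $\sum_{j,m}\delta_{ij,m|k}^\star$ by $\delta_i$ given only $\operatorname{Pr}\!\bigl(\bigcup_j\bigcap_m A_{jm|k}^{\complement}\bigr)\le \delta_i$ from the chance constraint. This is the hard part because the natural decomposition runs opposite to Boole's inequality: the sum of marginal scalar tails can, in principle, exceed the joint failure probability of the disjunctive-within-conjunctive structure. To close the argument, I would refine the candidate by concentrating the budget onto a single witness index $m^\star(j)$ per neighbor — setting $\delta_{ij,m|k}^\star=0$ for $m\ne m^\star(j)$ — so that the sum collapses to $\sum_{j}\operatorname{Pr}(A_{j,m^\star(j)|k}^{\complement})$, and then choose $m^\star(j)$ as the half-plane with the smaller tail per neighbor. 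A complementary Boole-type inequality across $j$ then has to be invoked to bound this reduced sum by $\delta_i$, exploiting that each $\operatorname{Pr}(A_{j,m^\star(j)|k}^{\complement})$ is dominated by $\operatorname{Pr}(\bigcap_m A_{jm|k}^{\complement})$ plus a correction controlled by the witness choice. This witness-selection and risk-concentration step, which reconciles the outer conjunction with the inner disjunction in \eqref{eq:condition_1}, is where the combinatorial and probabilistic difficulty of the proof concentrates.
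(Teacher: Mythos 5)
You have taken the theorem's ``only if'' at face value and set out to prove necessity: that satisfaction of the chance constraint \eqref{eq:prob_disjunct_constraints} forces the existence of an allocation $\{\delta_{ij,m|k}\}$ satisfying \eqref{eq:condition_1}--\eqref{eq:condition_2}. That direction is not what the paper proves, and it is in fact false. Consider a single neighbor $j$ and a Gaussian $v_{i|k}$ centered on the axis of the cone just outside its apex, arranged so that (in your notation $A_{jm|k}=\{N_{ij,m|k}^{\top}v_{i|k}\ge c_{ij,m|k}\}$) each half-plane satisfies $\operatorname{Pr}(A_{j1|k})=\operatorname{Pr}(A_{j2|k})\approx 0.5$ while their union carries probability $0.95$. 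Then \eqref{eq:prob_disjunct_constraints} holds with $\delta_i=0.05$, yet any allocation satisfying \eqref{eq:condition_1} must, by Lemma~\ref{lemma:multi_variable}, assign some $\delta_{ij,m|k}\ge 0.5$ to at least one half-plane, violating \eqref{eq:condition_2}. This is exactly the obstruction you identify in your third paragraph --- the marginal tails can each exceed the joint failure probability --- and the ``complementary Boole-type inequality'' you hope to invoke does not exist: the correction $\operatorname{Pr}\bigl(A_{j,m^{\star}(j)|k}^{\complement}\bigr)-\operatorname{Pr}\bigl(\bigcap_{m}A_{jm|k}^{\complement}\bigr)$ is not controlled by the witness choice and can be of order one.

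The paper's proof runs in the opposite (sufficiency) direction, which is also the direction actually needed for Corollary~\ref{corollary1}: assuming \eqref{eq:condition_1} and \eqref{eq:condition_2}, it applies De~Morgan's law to write the left-hand side of \eqref{eq:prob_disjunct_constraints} as one minus the probability of $\bigcup_{j}\bigcap_{m}A_{jm|k}^{\complement}$, bounds that union by Boole's inequality, bounds each intersection over $m$ by the probability of the single event $A_{j,\hat m(j)|k}^{\complement}$ whose linear constraint is activated in \eqref{eq:condition_1}, and then uses Lemma~\ref{lemma:multi_variable} to bound each such tail by $\delta_{ij,\hat m(j)|k}$, so that the total risk is at most $\sum_{j,m}\delta_{ij,m|k}\le\delta_i$. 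All three inequalities point the right way only in that direction. The theorem's ``only if'' should be read as ``if'' (the deterministic system is a conservative inner approximation of the chance constraint); your proposal should be redirected to prove that implication, for which your first two paragraphs already contain most of the needed ingredients.
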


\begin{proof}
	It is evident that for any two events $A$ and $B$, the probability of occurrence of event $A$ and event $B$ simultaneously is equal to
\begin{IEEEeqnarray}{rCl}
	\operatorname{Pr} (A\bigcap B) &=& 1- \operatorname{Pr} (A^\complement \bigcup B^\complement).
\end{IEEEeqnarray}
where $\operatorname{Pr} (A^\complement \bigcup B^\complement)$ denotes the probability of occurrence of the complement of event $A$ or event $B$.
Thus, we have
\begin{IEEEeqnarray*}{rCl}
	&&\operatorname{Pr} \left( \bigcap\limits_{j\in \mathbb B_i} \bigcup\limits_{m\in\{1,2\}} \left( N_{ij,m}^{k\top} v_i^k \geq c_{ij,m}^k \right) \right) \\
%	&=& 1- \operatorname{Pr} \left( \bigcup\limits_{j\in \mathbb B_i} \left(\bigcup\limits_{m\in\{1,2\}} \left( N_{ij,m}^{k\top} v_i^k \geq c_{ij,m} \right) \right)^\complement  \right) \\
	&=& 1- \operatorname{Pr} \left( \bigcup\limits_{j\in \mathbb B_i} \left( \bigcap_{m\in\{1,2\}} N_{ij,m}^{k\top} v_i^k \leq c_{ij,m} \right) \right). 
	\IEEEyesnumber
\end{IEEEeqnarray*}

Besides, for any two events $A$ and $B$, we have
\begin{IEEEeqnarray}{rCl}
\operatorname{Pr}(A \bigcup B) &\leq & \operatorname{Pr}(A) + \operatorname{Pr}(B),
\end{IEEEeqnarray}
where $\operatorname{Pr}(A)$ and $\operatorname{Pr}(B)$ are the probability of occurrence of event $A$ and event $B$.
Hence, for any number of events $A_i$, we can derive that
\begin{IEEEeqnarray}{rCl}
\label{eq:theorem_cup}
	\operatorname{Pr} \left( \bigcup_{i} A_i \right) &\leq & \sum_i \operatorname{Pr} (A_i).
\end{IEEEeqnarray}
On the other hand, for any two events $A$ and $B$, the probability of occurrence of events $A$ and $B$ simultaneously fulfills
\begin{IEEEeqnarray}{C}
\operatorname{Pr}(A \bigcap B) \leq  \operatorname{Pr}(A), \quad
\operatorname{Pr}(A \bigcap B) \leq \operatorname{Pr}(B).
\end{IEEEeqnarray}
Hence, for any number of events $A_i$, we can derive that
\begin{IEEEeqnarray}{rCl}
\label{eq:theorem_cap}
	\operatorname{Pr} \left( \bigcap_{i} A_i \right) &\leq & \operatorname{Pr} (A_i), \forall i.
\end{IEEEeqnarray}
According to \eqref{eq:theorem_cup} and~\eqref{eq:theorem_cap}, we can obtain that
\begin{IEEEeqnarray*}{rCl}
	&& \operatorname{Pr} \left( \bigcap\limits_{j\in \mathbb B_i} \bigcup\limits_{m\in\{1,2\}} \left( N_{ij,m|k}^{\top} v_{i|k} \geq c_{ij,m|k} \right) \right) \\
	&\geq & 1-\sum\limits_{j\in \mathbb B_i} \operatorname{Pr} \left( \bigcup\limits_{m\in\{1,2\}} \left( N_{ij,m|k}^{\top} v_{i|k} \geq c_{ij,m|k} \right) \right) \\
	&\geq & 1 - \sum_{j\in \mathbb B_i} \left( \operatorname{Pr}(N_{ij,m|k}^{k\top} v_{i|k} \geq c_{ij,m|k} ), \; \forall m\in \{1,2\} \right).
\end{IEEEeqnarray*}

Notice that $v_{i|k} = L_v x_{i|k}$, where $L_v$ is used to extract the velocity vector $v_{i|k}$ from the state vector $x_{i|k}$ for the $i$th agent at the timestamp $k$.
Based on Lemma~\ref{lemma:multi_variable}, we can derive that
\begin{IEEEeqnarray*}{l}
	N_{ij,m|k}^{\top} L_v \hat x_{i|k} - c_{ij,m|k} \geq g_{ij,m}(\delta_{ij,m|k}) \\
	\quad \Rightarrow  \operatorname{Pr}(N_{ij,m|k}^{\top} v_{i|k} < c_{ij,m|k}) \leq \delta_{ij,m|k} \\
	c_{ij,m|k} - N_{ij,m|k}^{\top} L_v \hat x_{i|k} \geq g_{ij,m}(\delta_{ij,m|k}) \\ 
	\quad \Rightarrow \operatorname{Pr}(N_{ij,m|k}^{\top} v_{i|k} > c_{ij,m|k}) \leq \delta_{ij,m|k}. \yesnumber
\end{IEEEeqnarray*}
Therefore, we can obtain the two conditions, i.e., \eqref{eq:condition_1} and \eqref{eq:condition_2}.
Hence, there exists $\hat m$ such that
\begin{IEEEeqnarray}{rCl}
	\sum\limits_{j\in \mathbb B_i} \operatorname{Pr}\left( N_{ij,\hat m|k}^{\top} \hat v_{i|k} \leq c_{ij,\hat m|k}  \right) \leq \sum_{j\in \mathbb B_i} \delta_{ij,m|k} \leq \delta_i.
\end{IEEEeqnarray}
Thus, the proof is completed.
\end{proof}
In Theorem~\ref{theorem:theorem_risk_bound}, the parameter $\delta_{ij,m|k}$ represents the risk allocating to each of the chance constraints with respect to the $j$th agent at the timestamp $k$. Besides, there exists $m \in \{1,2\}$ such that at least one of the linear constraint $N_{ij,m|k}^{\top} L_v \hat x_{i|k} - c_{ij,m|k} \geq g_{ij,m}(\delta_{ij,m|k})$ is activated. By ensuring the inequality~\eqref{eq:condition_2}, the overall collision risk for the $i$th agent can be guaranteed to be no greater than $\delta_i$. 
For simplicity, we define
\begin{IEEEeqnarray}{rCl}
	h_{ij,m|k}(x_{i|k}) &=& N_{ij,m|k}^{\top} L_v x_{i|k} - c_{ij,m}.
\end{IEEEeqnarray}
According to Theorem~\ref{theorem:theorem_risk_bound}, we can transform the original probabilistic collision-avoidance chance constraint concerning other agents, i.e., \eqref{subeq:vo1}, as deterministic disjunctive constraints. Thus, the optimization problem~\eqref{eq:demo_problem} can be transformed as 
\begin{IEEEeqnarray*}{cCl}
	\label{eq:problem_transform}
	\min\limits_{\substack{x_{i|1:N}\\ u_{i|0:N-1} \\ \delta_{i\forall j\forall m} }} & \quad & \sum_{k=0}^{N-1} 
	J_{i|k}(x_{i|k},u_{i|k}) + J_{i|N}(x_{i|N}) \\
	\operatorname{s.t.} & \quad & \eqref{subeq:dyn1},~\eqref{subeq:noise1},~\eqref{subeq:init1},~\eqref{subeq:xlim1},~\eqref{subeq:ulim1} \\
	&& \bigcap_{j\in \mathbb B_i} \bigcup_{m\in \{1,2\}} h_{ij,m|k}(\hat x_{i|k}) \geq g_{ij,m|k}(\delta_{ij,m|k}) \IEEEeqnarraynumspace\IEEEyesnumber \IEEEyessubnumber \label{subeq:disjunctive1}\\
	&& \sum_{j\in\mathbb B_i}\sum_{m\in\{1,2\}} \delta_{ij,m|k} \leq \delta_i \IEEEnonumber \IEEEyessubnumber \label{subeq:disjunctive2}\\
	&& \delta_{ij,m|k}\geq 0, \forall j\in \mathbb B_i, \forall m\in \{1,2\} \IEEEnonumber \IEEEyessubnumber \label{subeq:disjunctive3}\\
	&& \forall k\in \mathbb Z_1^N.
\end{IEEEeqnarray*}
Note that $\delta_{ij,m|k}, \forall j\in \mathbb B_i, \forall m\in\{1,2\}$ is introduced as an optimization variable in Problem~\eqref{eq:problem_transform}, which should satisfy~\eqref{subeq:disjunctive1},~\eqref{subeq:disjunctive2}, and~\eqref{subeq:disjunctive3}. 

\begin{corollary}
	\label{corollary1}
	Any feasible solution to Problem~\eqref{eq:problem_transform} is a feasible solution to Problem~\eqref{eq:demo_problem}.
\end{corollary}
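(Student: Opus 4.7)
The plan is to observe that Problems~\eqref{eq:demo_problem} and~\eqref{eq:problem_transform} share every constraint except that the probabilistic collision-avoidance condition~\eqref{subeq:vo1} has been replaced by the deterministic disjunctive system \eqref{subeq:disjunctive1}--\eqref{subeq:disjunctive3}, together with the auxiliary slack variables $\delta_{ij,m|k}$. Since the dynamics equation~\eqref{subeq:dyn1}, the noise model~\eqref{subeq:noise1}, the initial distribution~\eqref{subeq:init1}, and the box constraints~\eqref{subeq:xlim1}--\eqref{subeq:ulim1} appear verbatim in both problems, they are automatically inherited by the projection of any feasible point of~\eqref{eq:problem_transform} onto the $(x_{i|1:N}, u_{i|0:N-1})$ coordinates.

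The only substantive step is to verify that~\eqref{subeq:vo1} holds along the projected trajectory. Here I would invoke Theorem~\ref{theorem:theorem_risk_bound} directly: at each timestamp $k\in\mathbb Z_1^N$, the feasibility of~\eqref{subeq:disjunctive1}, \eqref{subeq:disjunctive2}, and~\eqref{subeq:disjunctive3} is literally the hypothesis~\eqref{eq:condition_1}--\eqref{eq:condition_2} of the theorem (once $h_{ij,m|k}$ is unfolded into $N_{ij,m|k}^\top L_v\hat x_{i|k}-c_{ij,m|k}$), and the conclusion of the theorem is precisely the chance constraint~\eqref{eq:prob_disjunct_constraints}, which was derived as an equivalent rewriting of~\eqref{subeq:chance_constraint_velo} and hence of~\eqref{subeq:vo1}. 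Applying this argument fiberwise across all $k$ yields the required probabilistic bound $1-\delta_i$ at every stage of the prediction horizon.

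The main obstacle is essentially bookkeeping rather than mathematical: the corollary is a direct consequence of Theorem~\ref{theorem:theorem_risk_bound}, and no new inequality needs to be proved. The one subtlety worth stating explicitly is that~\eqref{eq:problem_transform} enlarges the decision vector by the slack variables $\delta_{ij,m|k}$, so the phrase \emph{a feasible solution to~\eqref{eq:problem_transform} is a feasible solution to~\eqref{eq:demo_problem}} must be understood modulo projection onto $(x_{i|1:N}, u_{i|0:N-1})$. Because the objective~\eqref{subeq:obj1} depends only on these projected components, this projection simultaneously preserves both feasibility and cost, which closes the argument.
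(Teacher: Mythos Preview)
Your proposal is correct and follows essentially the same approach as the paper's own proof: invoke Theorem~\ref{theorem:theorem_risk_bound} to show that the deterministic constraints \eqref{subeq:disjunctive1}--\eqref{subeq:disjunctive3} imply the chance constraint~\eqref{subeq:vo1}, and observe that all remaining constraints are shared verbatim between the two problems. Your added remark about projecting out the slack variables $\delta_{ij,m|k}$ is a nice clarification that the paper leaves implicit.
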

\begin{proof}
	Theorem~\eqref{theorem:theorem_risk_bound} shows that the constraints,~\eqref{subeq:disjunctive1},~\eqref{subeq:disjunctive2} and~\eqref{subeq:disjunctive3}, indicate the complete probabilistic chance constraint~\eqref{subeq:vo1}. All other constraints are identical between Problem~\eqref{eq:problem_transform} and Problem~\eqref{eq:demo_problem}, which completes the proof.
\end{proof}

% blackmore2011chance, 
According to Corollary~\ref{corollary1}, the optimal sequence of control inputs computed by solving Problem~\eqref{eq:problem_transform} is feasible for the original optimization problem, i.e., Problem~\eqref{eq:demo_problem}, and we can address the problem~\eqref{eq:demo_problem} by solving Problem~\eqref{eq:problem_transform}, which is a disjunctive program~\cite{sherali2012optimization}. However, due to the introduction of variable $\delta_{ij,m|k}$, the computation time to solve Problem~\eqref{eq:problem_transform} could be high. 
Hence, we introduce a technique by setting $\delta_{ij,m|k}$ fixed to decrease the computation time substantially for real applications, i.e., setting $\delta_{ij,m|k}$ be fixed and equal for all $j,m$, which means
\begin{IEEEeqnarray}{rCl}
\delta_{ij,m|k} &=& \frac{\delta_i}{2|\mathbb B_i|},
\end{IEEEeqnarray}
where $|\mathbb B_i|$ denotes the number of elements in the set $\mathbb B_i$ and the value 2 denotes the number of elements in the set $\{1,2\}$. In this case, the conditions~\eqref{eq:condition_1} and~\eqref{eq:condition_2} are satisfied. Thus, we can formulate an optimization problem as
\begin{IEEEeqnarray*}{cCl}
	\label{eq:problem_simp}
	\min\limits_{\substack{x_{i|1:N}\\ u_{i|0:N-1}}} & \quad & \sum_{k=0}^{N-1} 
	J_{i|k}(x_{i|k},u_{i|k}) + J_{i|N}(x_{i|N}) \IEEEyesnumber \IEEEyessubnumber\\
	\operatorname{s.t.} & \quad & \eqref{subeq:dyn1},~\eqref{subeq:noise1},~\eqref{subeq:init1},~\eqref{subeq:xlim1},~\eqref{subeq:ulim1} \\
	&\quad& \bigcap_{j\in \mathbb B_i} \bigcup_{m\in \{1,2\}} h_{ij,m|k}(\hat x_{i|k}) \geq g_{ij,m|k}\left(\frac{\delta_i}{2|\mathbb B_i|}\right) \IEEEeqnarraynumspace \IEEEnonumber \IEEEyessubnumber\\
	&\quad& \forall k\in \mathbb Z_1^{N}.
\end{IEEEeqnarray*}
Obviously, the optimization problem~\eqref{eq:problem_simp} simplifies Problem~\eqref{eq:problem_transform} by setting the optimization variable $\delta_{ij,m|k}$ in~\eqref{eq:problem_transform} as a fixed and equal value. The feasible set of Problem~\eqref{eq:problem_transform} contains the feasible set of Problem~\eqref{eq:problem_simp}; thus, Problem~\eqref{eq:problem_simp} provides an upper bound to the optimal cost of Problem~\eqref{eq:problem_transform}.
\begin{corollary}
	\label{corollary2}
	A solution to Problem~\eqref{eq:problem_simp} is a feasible solution to Problem~\eqref{eq:problem_transform}, and also a feasible solution to Problem~\eqref{eq:demo_problem}.
\end{corollary}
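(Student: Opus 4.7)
The plan is to construct an explicit feasible point for Problem~\eqref{eq:problem_transform} out of any given solution to Problem~\eqref{eq:problem_simp}, and then invoke Corollary~\ref{corollary1} to push feasibility one step further down to Problem~\eqref{eq:demo_problem}. Concretely, given an optimizer $(x_{i|1:N}^{\star}, u_{i|0:N-1}^{\star})$ of Problem~\eqref{eq:problem_simp}, I augment it with the auxiliary risk-allocation variables $\delta_{ij,m|k}^{\star} := \frac{\delta_i}{2|\mathbb B_i|}$ for every $j\in\mathbb B_i$, $m\in\{1,2\}$, and $k\in\mathbb Z_1^{N}$, and then verify that the augmented tuple satisfies every constraint of Problem~\eqref{eq:problem_transform}.

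The verification splits into two parts. The constraints \eqref{subeq:dyn1}, \eqref{subeq:noise1}, \eqref{subeq:init1}, \eqref{subeq:xlim1}, and \eqref{subeq:ulim1} appear verbatim in both problems, so they are inherited for free; only the three disjunctive-constraint blocks \eqref{subeq:disjunctive1}, \eqref{subeq:disjunctive2}, and \eqref{subeq:disjunctive3} require attention. Nonnegativity \eqref{subeq:disjunctive3} is immediate because $\delta_i \ge 0$ and $|\mathbb B_i| \ge 1$. The budget constraint \eqref{subeq:disjunctive2} holds with equality by a direct count: the double sum over $j\in\mathbb B_i$ and $m\in\{1,2\}$ contains exactly $2|\mathbb B_i|$ identical terms, each equal to $\frac{\delta_i}{2|\mathbb B_i|}$, summing to $\delta_i$, and the inequality is non-strict so equality is admissible. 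Finally, the disjunctive constraint \eqref{subeq:disjunctive1} evaluated at $\delta_{ij,m|k}^{\star}=\frac{\delta_i}{2|\mathbb B_i|}$ is exactly the constraint that $(x_{i|1:N}^{\star}, u_{i|0:N-1}^{\star})$ already satisfies in Problem~\eqref{eq:problem_simp} by assumption.

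Having constructed a feasible point of Problem~\eqref{eq:problem_transform}, the second half of the corollary is now an immediate application of Corollary~\ref{corollary1}, giving the chain "feasible for \eqref{eq:problem_simp} $\Rightarrow$ feasible for \eqref{eq:problem_transform} $\Rightarrow$ feasible for \eqref{eq:demo_problem}". There is no substantial technical obstacle in this proof; it is really just bookkeeping. The only subtlety to watch is the counting in the budget constraint: one must be careful that the index sets being summed over match exactly the index sets over which $\delta_{ij,m|k}^{\star}$ has been fixed (i.e., $j$ ranges over $\mathbb B_i$ and $m$ over $\{1,2\}$, but $k$ is not part of the sum, so the same assignment is used independently at each prediction step). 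Once that is made explicit, both corollaries follow in a few lines.
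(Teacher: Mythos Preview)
Your proposal is correct and follows essentially the same approach that the paper intends: the paper's own proof is omitted as ``similar to that of Corollary~\ref{corollary1}'', and the surrounding text makes explicit that fixing $\delta_{ij,m|k}=\frac{\delta_i}{2|\mathbb B_i|}$ places the feasible set of Problem~\eqref{eq:problem_simp} inside that of Problem~\eqref{eq:problem_transform}, after which Corollary~\ref{corollary1} finishes the job. Your bookkeeping on the budget constraint and the index sets is exactly the detail the paper leaves implicit.
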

\begin{proof}
	The proof of Corollary~\ref{corollary2} is similar to that of Corollary~\ref{corollary1}, and thus it is omitted. 
\end{proof}

%Alternatively, we can also set the risks $\delta_{ij,m|k}$ as a conservative value, which means $\delta_{ij,m|k}=\delta_i,\forall j, \forall m, \forall k$. Here, the condition~\eqref{eq:condition_1} can be maintained in this case; however, the condition~\eqref{eq:condition_2} is violated. Thus, this risks setting represents a relaxation of Problem~\eqref{eq:problem_transform}, and the relaxed problem is given by
%\begin{IEEEeqnarray*}{cCl}
%	\label{eq:problem_relax}
%	\min\limits_{\substack{x_{i|1:N}\\ u_{i|0:N-1}}} & \quad & \sum_{k=0}^{N-1} 
%	J_{i|k}(x_{i|k},u_{i|k}) + J_{i|N}(x_{i|N})  \IEEEyesnumber \IEEEyessubnumber\\
%	\operatorname{s.t.} & \quad & \eqref{subeq:dyn1},~\eqref{subeq:noise1},~\eqref{subeq:init1},~\eqref{subeq:xlim1},~\eqref{subeq:ulim1} \\
%	&\quad& \bigcap_{j\in \mathbb B_i} \bigcup_{m\in \{1,2\}} h_{ij,m|k}(\hat x_{i|k})\geq g_{ij,m|k}(\delta_{i}) \IEEEnonumber \IEEEyessubnumber \label{subeq:delta_tech2}\\
%	&\quad& \forall k\in \mathbb Z_1^{N}.
%\end{IEEEeqnarray*}
%In Problem~\eqref{eq:problem_relax}, it is straightforward that the setting of $\delta_{ij,m|k}$ satisfies $\delta_{ij,m|k}\leq \delta_i$. Since $g_{ij|k}(y)$ is a monotonically decreasing function with respect to $y$, the constraint~\eqref{subeq:delta_tech2} in Problem~\eqref{eq:problem_relax} is looser than the constraints~\eqref{subeq:disjunctive1},~\eqref{subeq:disjunctive2} and~\eqref{subeq:disjunctive3}. Therefore, the optimal cost obtained by Problem~\eqref{eq:problem_relax} is a lower bound of the optimal cost in Problem~\eqref{eq:problem_transform}.

\section{Problem Solving}
\label{section:problem_solving}
%,wolsey2007mixed
So far, Problem~\eqref{eq:problem_transform} can be simplified as Problem~\eqref{eq:problem_simp} 
by using the $\delta_{ij,m|k}$ setting technique to improve the computational efficiency. The two optimization problems are disjunctive programming problems. A general disjunctive program can be transformed as an equivalent optimization problem by introducing binary variables; thus, a mixed-integer programming problem can be formulated. The mixed-integer programming has been well studied and can be addressed by many highly optimized software~\cite{floudas2005mixed}. In summary, Problem~\eqref{eq:problem_transform} is the resulting optimization problem with disjunctive convex constraints~\eqref{subeq:disjunctive1}, and the risks $\delta_{ij,m|k}$ should satisfy~\eqref{subeq:disjunctive2} and~\eqref{subeq:disjunctive3}. Further, the mixed integer programming problem is transformed as
\begin{IEEEeqnarray*}{ccl}
\label{eq:problem_minlp}
	\min\limits_{\substack{x_{i|1:N}\\ u_{i|0:N-1} \\ z_{i|\forall j \forall m}}} & \quad & \sum_{k=0}^{N-1} \left\|x_{i|k+1}-x_{\mathrm{ref},i|k+1} \right\|_{Q_{i}} + \left\|u_{i|k}\right\|_{R_{i}} \IEEEyesnumber \IEEEyessubnumber\\
	\operatorname{s.t.} & \quad & \eqref{subeq:dyn1},~\eqref{subeq:noise1},~\eqref{subeq:init1},~\eqref{subeq:xlim1},~\eqref{subeq:ulim1},~\eqref{subeq:disjunctive2},~\eqref{subeq:disjunctive3} \\
	&\quad& g_{ij,m|k}(\delta_{ij,m|k}) - h_{ij,m|k}(\hat x_{i|k}) \leq M(1-z_{ij,m|k}) \\ \IEEEnonumber \IEEEyessubnumber  \label{subseq:binary1} \\
	&\quad& \sum\limits_{m=1}^2 z_{ij,m|k} \geq 1 \IEEEnonumber \IEEEyessubnumber \label{subeq:binary2}\\
	&\quad& z_{ij,m|k} \in \{0,1\} \IEEEnonumber \IEEEyessubnumber \label{subeq:binary3}\\
	&\quad& \forall k\in \mathbb Z_1^{N}, \forall j\in \mathbb B_i, \forall m\in \{1,2\},
\end{IEEEeqnarray*}
where $M$ is a large enough positive constant, $z_{ij,m|k}$ denotes the binary variables. The value of the binary variable $z_{ij,m|k}$ in~\eqref{subseq:binary1} determines whether the corresponding linear constraint is activated or not, and~\eqref{subeq:binary2} guarantees that  at least one constraint in each disjunction is imposed, as required. Therefore, the solution of the mixed-integer programming, i.e., \eqref{eq:problem_minlp}, can guarantee that the collision risk between the ego agent $i$ and all the other agents $\forall j\in \mathbb B_i$ is less than the risk bound $\delta_i$. Note that the variable $\delta_{ij,m|k}$ should be set as the fixed value $\frac{\delta_i}{2|\mathbb B_i|}$ when solving Problem~\eqref{eq:problem_simp}.
% and Problem~\eqref{eq:problem_relax}.

As for the collision avoidance with static obstacles, the transformation is similar. The probabilistic chance constraints of static obstacles can be transformed into disjunctive deterministic constraints, and then, be transformed into mixed-integer constraints by introducing the binary variables. Therefore, the related mixed integer constraints with respect to the avoidance of static obstacles can be generated similarly.

\subsection{Problem Setting}
During the robust motion planning process, we utilize the geometric information of velocity obstacles $VO_{ij|k}$, which requires the position and velocity information of the agents at the timestamp $k$ in the prediction horizon. 
However, the position and velocity information of agents $p_{i|k}$ and $v_{i|k}$ is hard to collect in the distributed method. 
A normal solution is to assume that the position and velocity $p_{\forall i|k}$ and $v_{\forall i|k}$ keep unchanged during the prediction horizon $k\in \mathbb Z_1^N$, which could incur the weak performance and produce a solution which is far from the optimal one. 
Thus, we need to use the time-variant information of $p_{\forall i|k}$ and $v_{\forall i|k}$ at the timestamp $k$ in the prediction horizon to construct the velocity obstacle $VO_{ij|k}$. In our setting, the predicted states $x_{\forall i|k}$ at current running step $t$ could be obtained through the assumption that the velocity of the agents remains fixed in the prediction horizon $k\in \mathbb Z_1^N$. 
An agent can infer the predicted states of its neighbors by just observing the current state of neighbors; thus, communication among agents is not necessary for this setting.

\begin{remark}
	Compared with the existing velocity obstacle method, our proposed method could be smoother owing to the MPC scheme, because the velocity obstacle method assumes the instantaneous velocity changes for the agents~\cite{van2011reciprocal}. 
	Besides, the velocity obstacle method requires the design of the preferred velocity, which an agent would take in the absence of other agents or obstacles and could be chosen manually or by some external algorithms~\cite{snape2011hybrid}. 
	However, this design regarding the preferred velocity is not required in our proposed method, which makes our approach favorable in real applications.
\end{remark}

\section{Results}
\label{section:results}
This section describes the implementation of the proposed method, and the effectiveness of the method is evaluated by simulations. All the relevant parameters of the simulation are shown in Table~\ref{tab:setting_params} in the Appendix. Here, we add the Gaussian noise to the states of agents model, and the added noise is zero mean with the covariance $W_i$. All of the simulations are implemented in Python 3.7 environment on a PC with Intel i5 CPU@3.30 GHz. 

\subsection{Multi-agent Systems Motion Planning w/o Static Obstacles}
In this simulation, there are 20 agents which need to reach their desired target positions. The means of initial positions of the 20 agents are uniformly located on a circle with a radius of 10 m. In this task, all agents' initial and target positions are symmetric along with the origin, i.e., both the $x$ and $y$ axis. For example, an agent whose mean of initial position is (10,~0)~m is required to reach its desired destination (-10,~0)~m. All agent are represented by a circle with radius $r_i = 0.1$ m and different colors. The prediction horizon $N$ is set as $N=25$, and the number of running steps is $t_{\mathrm{run}}=12.5$~s. The risk bound is $\delta_i=0.1$. Fig.~\ref{fig:snapshot} shows the snapshots of these agents in six different timestamps. According to this figure, we can observe that all agents can successfully arrive at their desired target positions without colliding with other agents. 
\begin{figure}[th]
	\centering
	\includegraphics[width=0.45\textwidth, trim=10 30 0 20,clip]{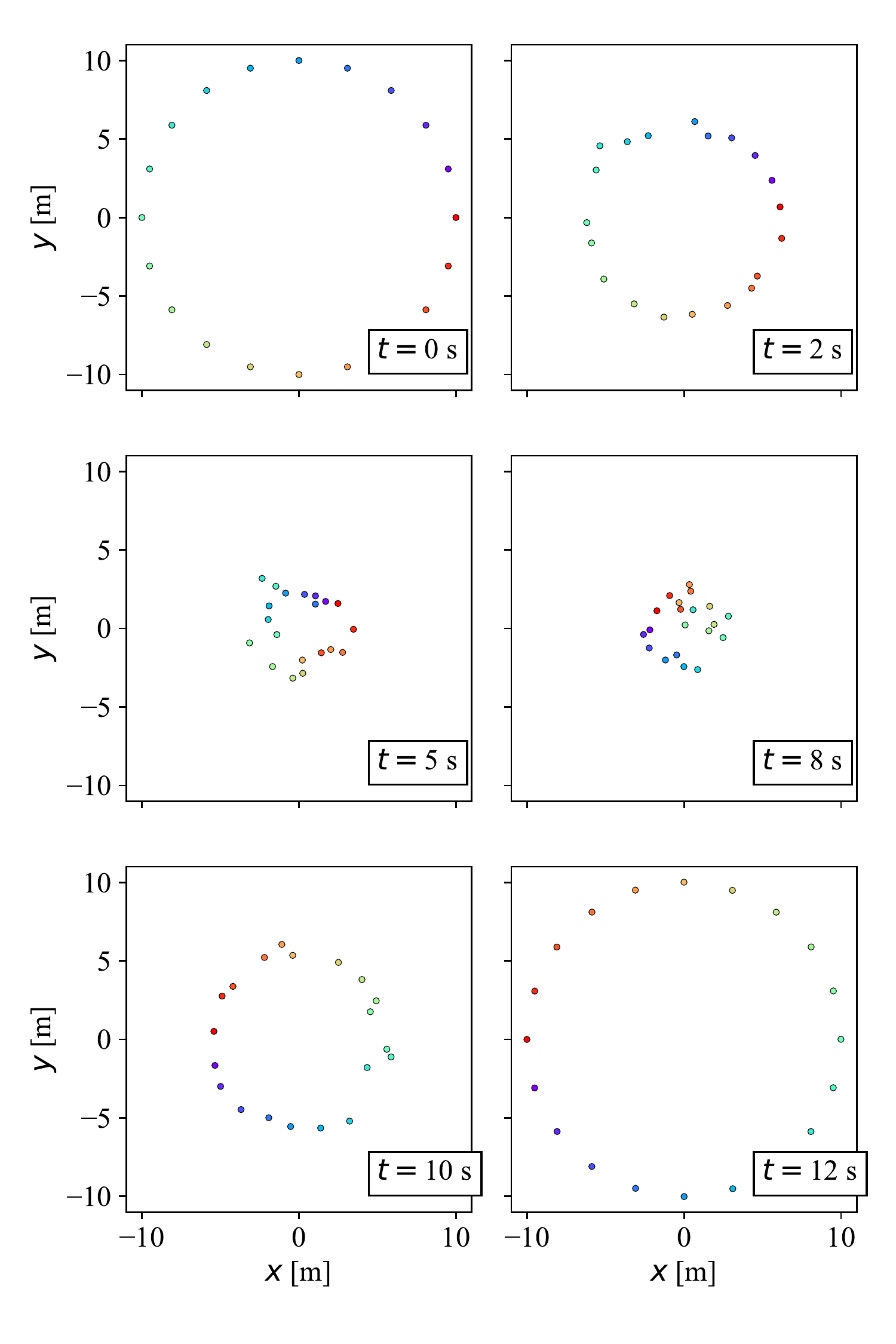}
	\caption{Snapshots of collision-free motions of 20 agents in six different timestamps.}
	\label{fig:snapshot}
\end{figure}

Fig.~\ref{fig:comp_time_with_diff_horizon} presents the relationship between the average computational time of each agent per running step and the varying prediction horizon $N$ of the distributed MPC problem. In this figure, the line represents the average computing time per agent per running step, and the filled region means the range of the related computing time. According to Fig.~\ref{fig:comp_time_with_diff_horizon}, with the prediction horizon $N$ increasing, the average computational time of each agent per execution increases. This figure also indicates that the average computational time of each agent per execution is still within the allowable range (0,~$\tau_s$]~s, even when we set $N=50$. 
\begin{figure}[th]
	\centering
	\includegraphics[width=0.4\textwidth, trim=0 20 0 10,clip]{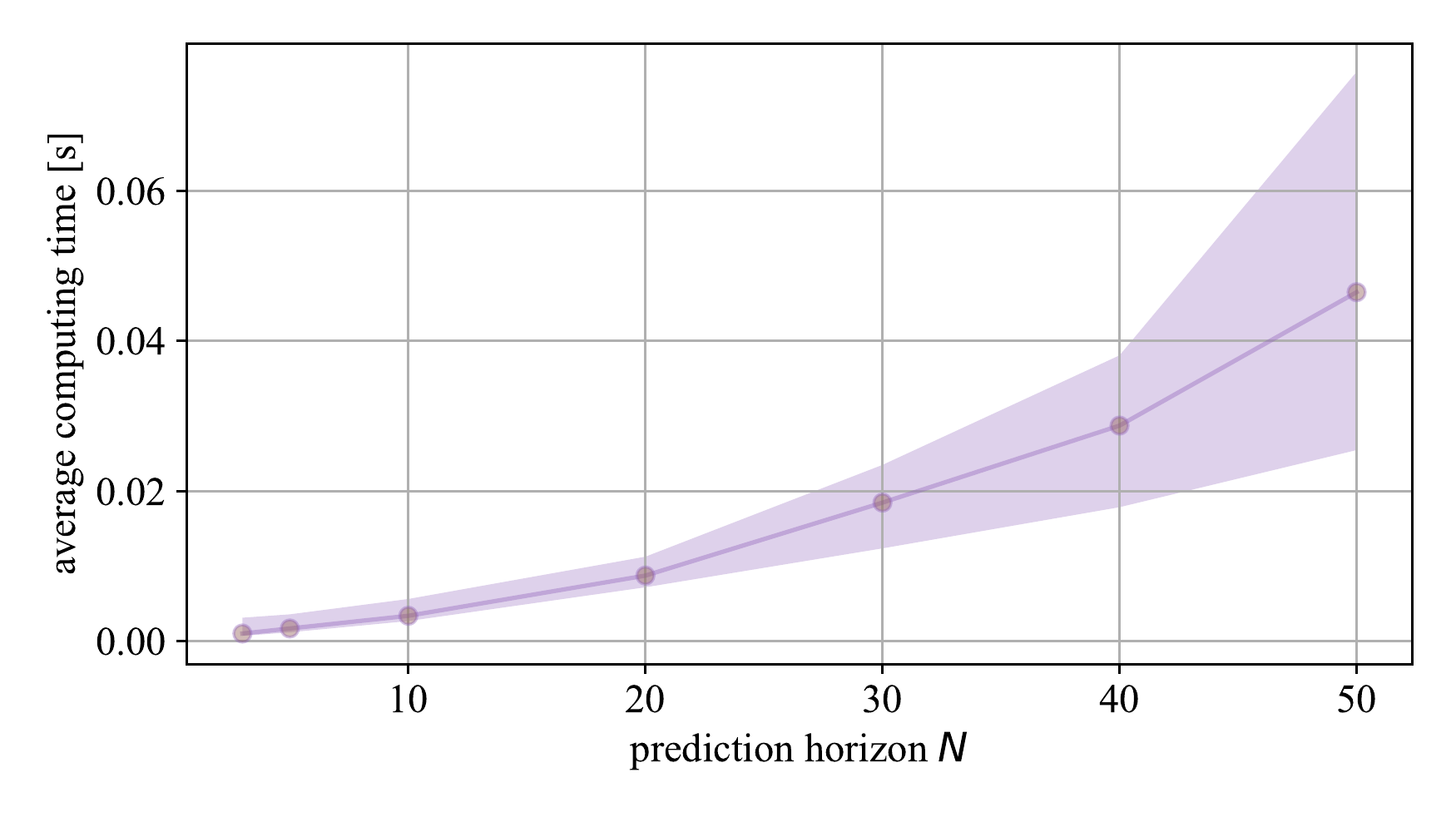}
	\caption{The relationship between the average computing time per agent per execution and the varying prediction horizon $N$.}
	\label{fig:comp_time_with_diff_horizon}
\end{figure}
Fig.~\ref{fig:comp_time_with_diff_number} shows the computational time with an increasing number of agents moving across the origin circle. In this case, similarly, all agents' initial positions and target positions are symmetric along with the origin. The prediction horizon is set as $N=10$. Based on Fig.~\ref{fig:comp_time_with_diff_number}, it is straightforward to observe that the computational time increases with the number of agents increasing. 
\begin{figure}[th]
	\centering
	\includegraphics[width=0.4\textwidth, trim=0 15 0 10,clip]{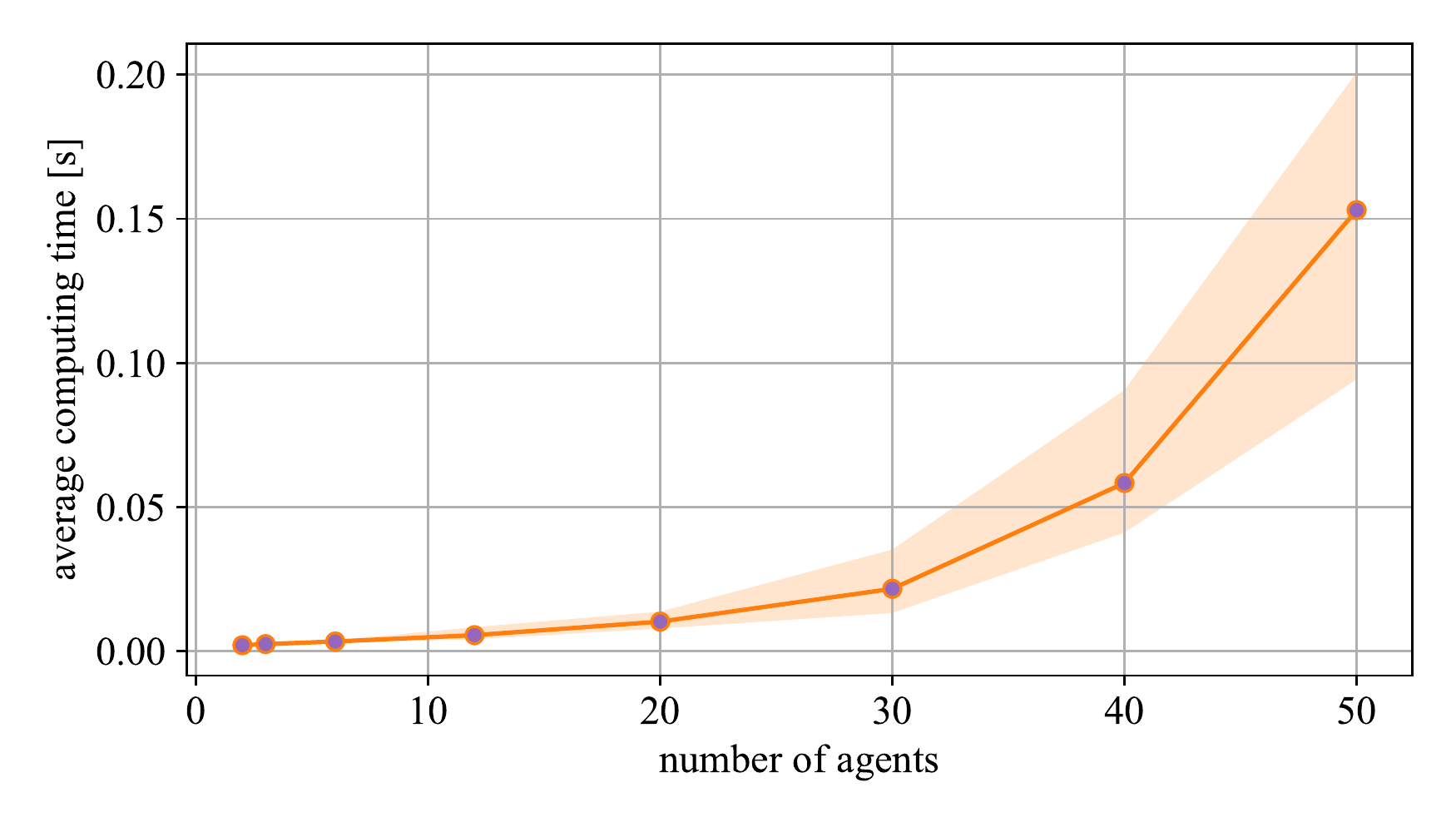}
	\caption{The average computing time per running step per agent of different numbers of agents (the prediction horizon is N=10).}
	\label{fig:comp_time_with_diff_number}
\end{figure}

In order to show the effectiveness of our proposed method, the map, as shown in Fig.~\ref{fig:traj_with_static_obs}, is designed to be used as an application environment. This map in this figure is challenging for chance-constrained motion planning methods, since there is a narrow corridor when the agent passes from the initial position (0,~0)~m to the target position (0,~10)~m. The initial and target positions are shown as the orange and blue circles with a radius of the agent radius $r_i=0.2$~m, respectively. Fig.~\ref{fig:traj_with_static_obs} shows an example of the trajectories planned by our method with $\epsilon_i = 0.01$, represented by the orange line. The prediction horizon is set as $N=20$. For this demonstrated trajectory in Fig.~\ref{fig:traj_with_static_obs}, it is obvious that our proposed method can plan collision-free trajectories for all agents successfully.
\begin{figure}[th]
	\centering
	\includegraphics[width=0.4\textwidth, trim=0 20 0 10,clip]{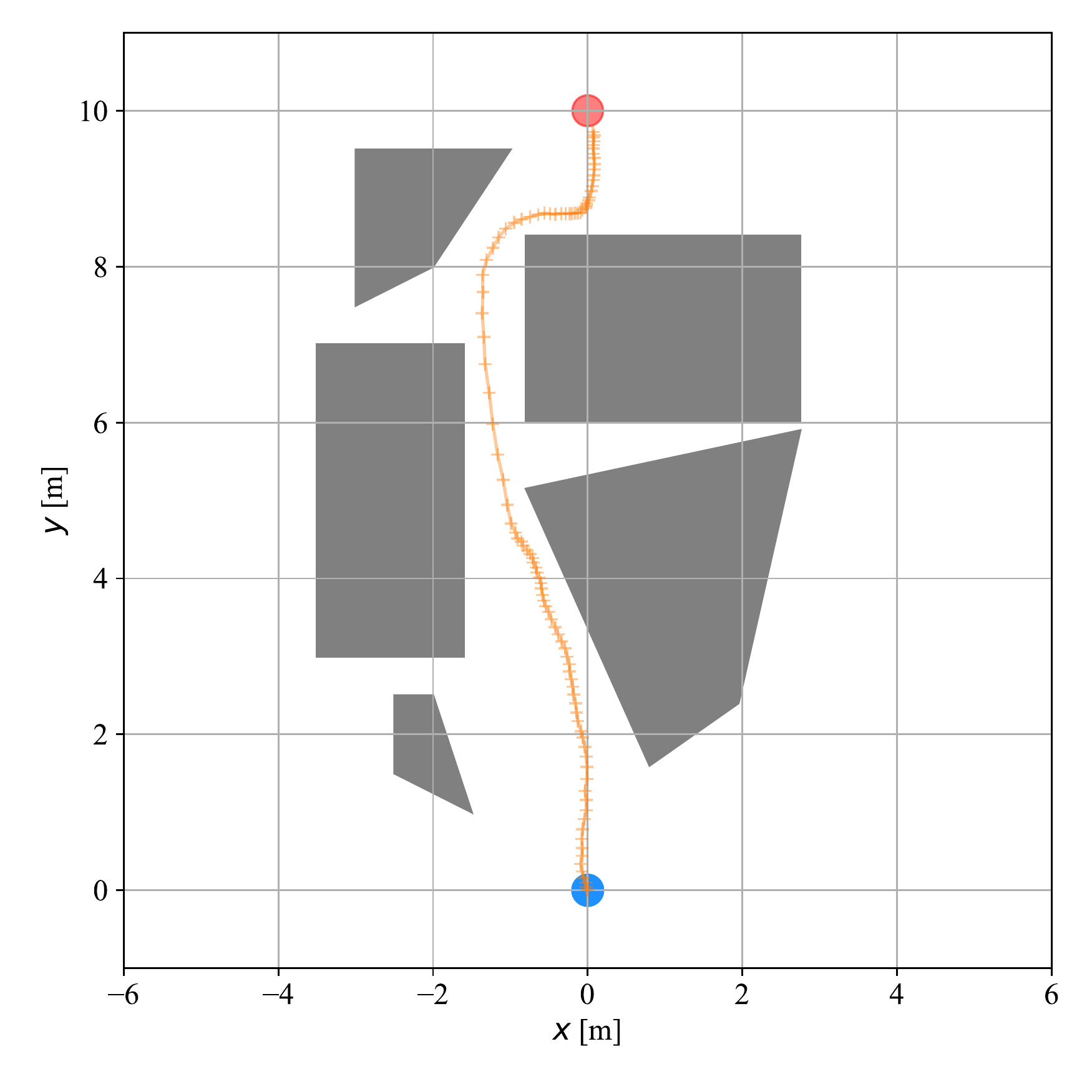}
	\caption{Example resulting trajectory in the designed map with $\epsilon=0.01$.}
	\label{fig:traj_with_static_obs}
\end{figure}
Besides, the value of the risk bounds $\delta_i$ and $\epsilon_i$ will significantly influence the resulting trajectory of the agent. 
Fig.~\ref{fig:traj_with_static_obs_diff_delta} presents the planned trajectories of the agent with different risk bound value $\epsilon_i$, which is chosen as $0.1,\ 0.01,\ 0.001$, respectively. According to this figure, we can observe that the planned trajectory becomes more conservative with the predefined collision risk bound $\epsilon_i$ decreasing. Note that a small risk bound $\epsilon_i$ means a smaller allowable collision risk. Thus, we can derive that the value of risk bound will influence the conservatism of the planned trajectory, and a proper collision risk bound can balance conservatism and performance.
\begin{figure}[th]
	\centering
	\includegraphics[width=0.4\textwidth, trim=0 20 0 10,clip]{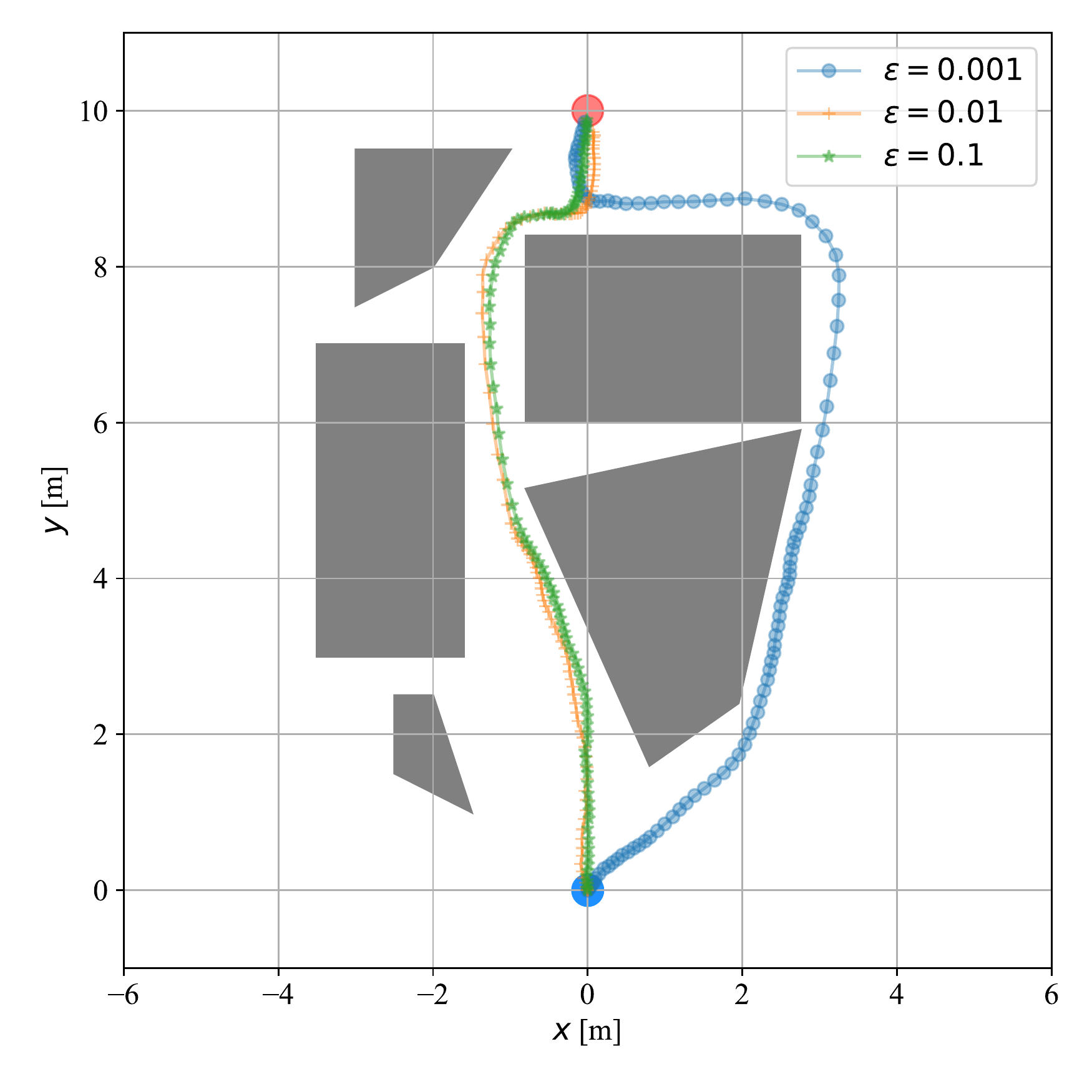}
	\caption{Resulting trajectories in the designed map for different $\epsilon_i$ values.}
	\label{fig:traj_with_static_obs_diff_delta}
\end{figure}

\subsection{Comparison with Velocity Obstacle Method}
\label{section:results_compare1}
In this section, we compare the performance of our proposed method and the pure velocity obstacle method, which is abbreviated as VO in the following text. The prediction horizon of our method is set as $N=20$, and the radius of the agent is set as $r_i=0.2$~m. The risk bound is $\delta_i=0.1$.
Here, 4 agents are utilized to achieve the robust motion planning task. The 4 agents are initially located at a circle with a radius of 4~m. In this task, all 4 agents are required to reach their desired destinations which are centrosymmetric to the initial positions. For example, the initial and target positions of the green agent are set as (4,~0)~m and (-4,~0)~m, respectively. The trajectories of the 4 agents by using the are illustrated in Fig.~\ref{fig:comp_traj_vo}. Both methods can navigate agents to reach their desired destinations without collision. However, our method could provide more smoother trajectories than the VO method. 
Fig.~\ref{fig:comp_velo_vo} demonstrates the change of velocities of one agent (randomly chosen from the 4 agents) with running step $t$ changing. According to this figure, the volatility of the velocity $v_i$ is much smoother than that using the velocity obstacle method. Note that the velocity limitation is set as [-10,~10]~m/s.
\begin{figure}[t]
	\centering
	\includegraphics[width=0.4\textwidth, trim=10 20 0 10,clip]{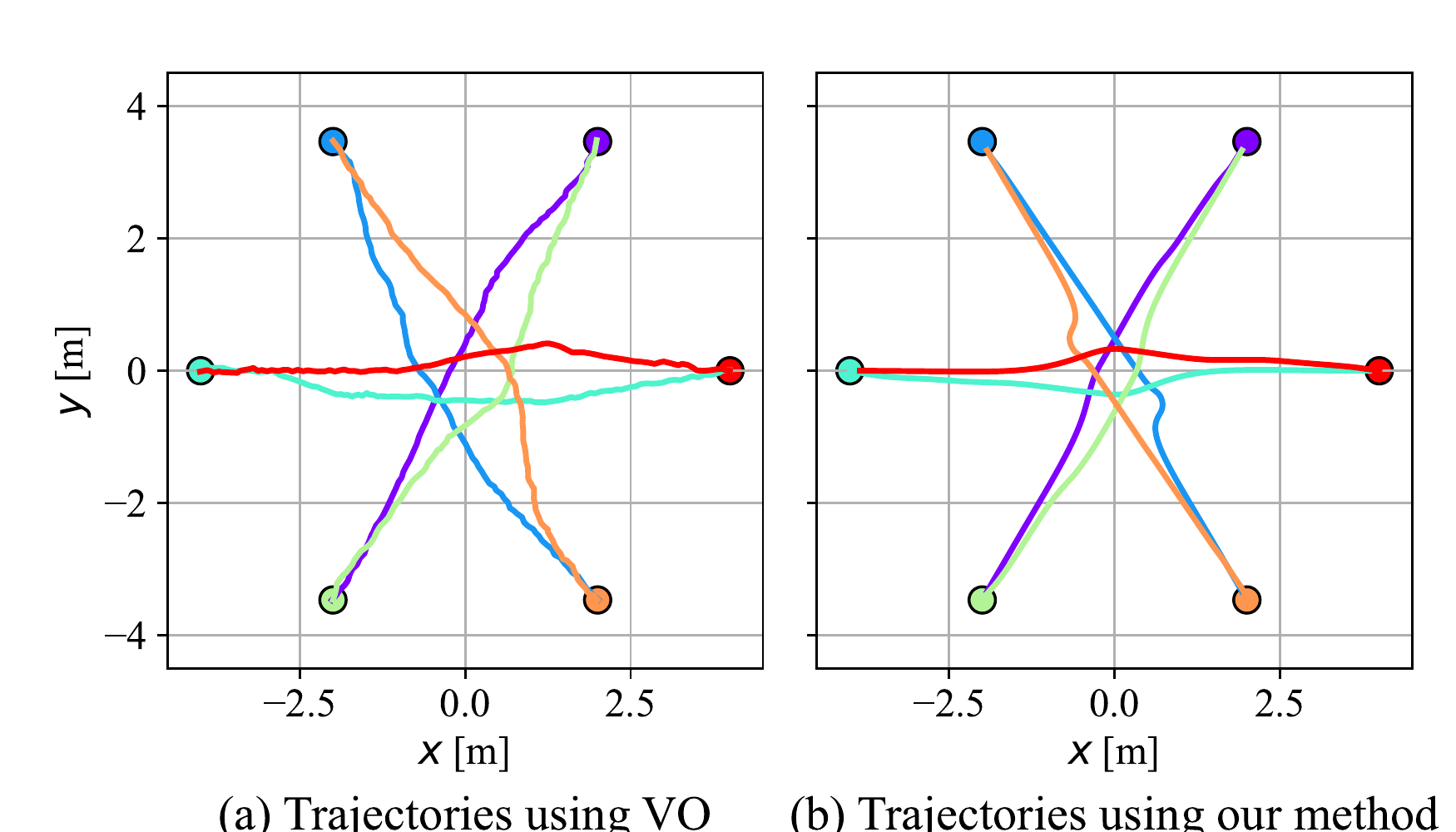}
	\caption{Resulting trajectories of 4 agents using VO and our method.}
	\label{fig:comp_traj_vo}
\end{figure}
\begin{figure}[t]
	\centering
	\includegraphics[width=0.4\textwidth, trim=0 20 0 15,clip]{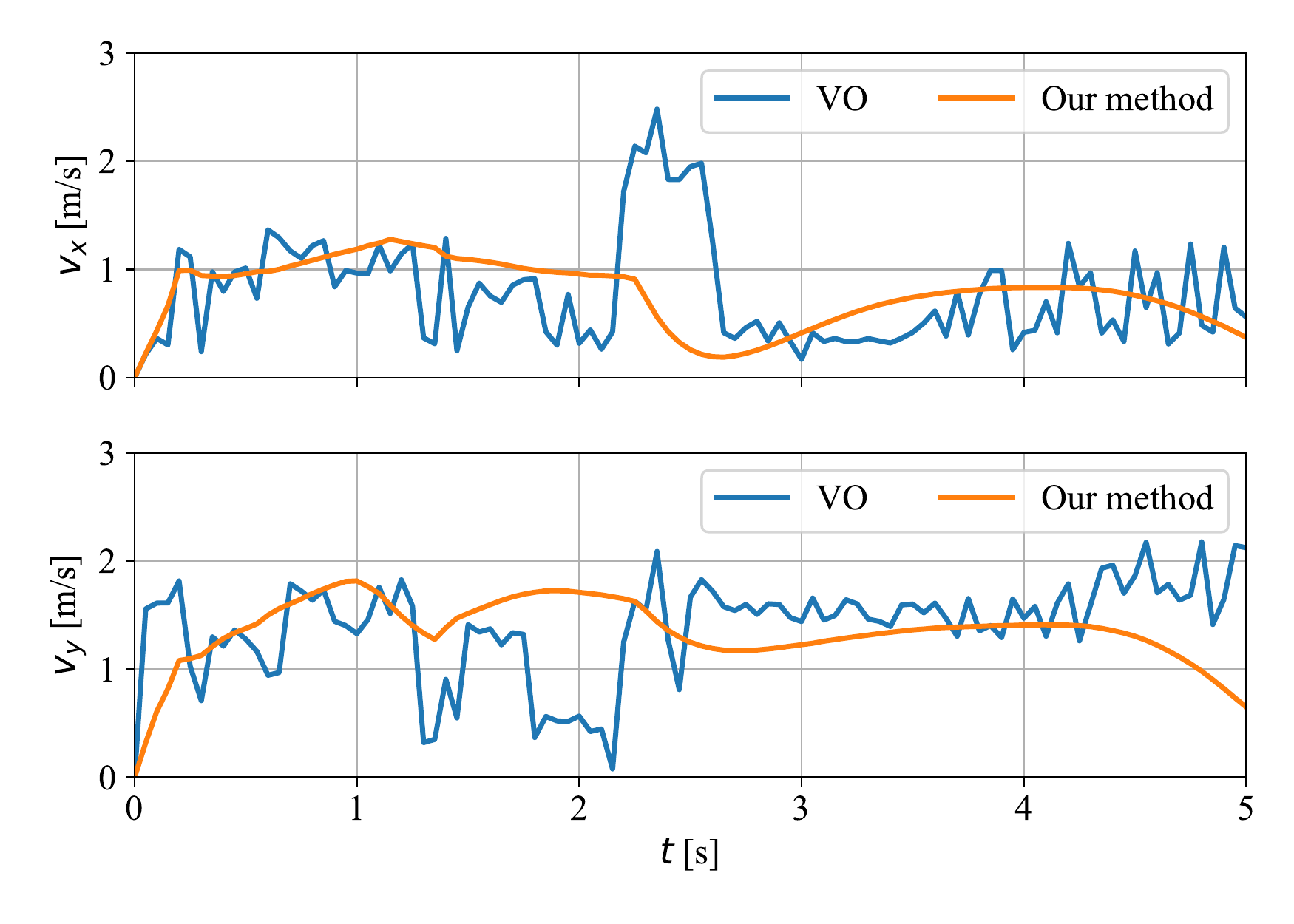}
	\caption{Resulting velocities of one agent using VO and our method (the agent is chosen randomly).}
	\label{fig:comp_velo_vo}
\end{figure}
In our proposed method, we set the control inputs, i.e., acceleration in $x$- and $y$-dimension, should be confined into the range [-10,~10]~m/s$^2$. However, as for the VO method, the control inputs are the command velocities, which means the VO method is just suitable for agents with no dynamics or a single-integrator dynamics model, instead of double-integrator dynamics. Thus, our proposed method also outperforms the VO method by considering the model of complex dynamics. The resulting control inputs also remain smooth since there are cost terms regarding the penalization of control inputs in our distributed motion planning method.  

Fig.~\ref{fig:comp_time_vo} is used to show the computational efficiency of our proposed method, compared with the VO method. This figure demonstrates the computing time of our proposed method and velocity obstacle method in each running step. Unfortunately, the VO method could generate a feasible and collision trajectory in a shorter time compared with our method, even though the trajectory of the VO method is highly nonsmooth.
\begin{figure}[th]
	\centering
	\includegraphics[width=0.4\textwidth, trim=0 20 0 10,clip]{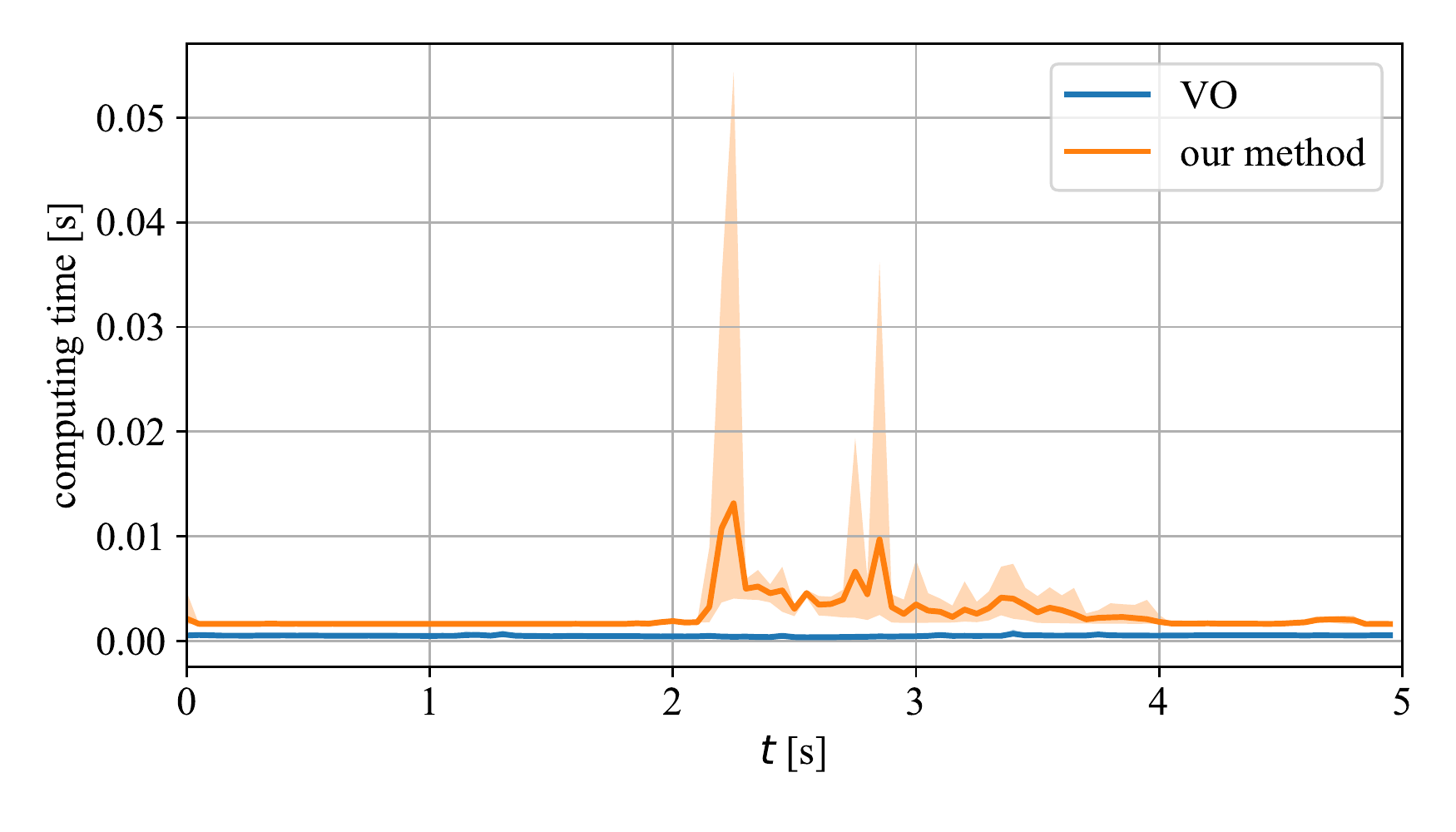}
	\caption{Changes of the computing time per agent using VO and our method.}
	\label{fig:comp_time_vo}
\end{figure}

The relationship between the computing time and the number of agents is demonstrated in Fig.~\ref{fig:comp_time_with_diff_number_vo}. For each number of agents, there are 20 random trials using the two methods; thus, the average computing time can be derived over all agents and all executions. Based on this figure, we can observe that the average computing time for the two methods increases with the number of agents increasing. Nevertheless, the increasing rate is different between the two methods. The average computational time of our proposed method rises faster than that of the velocity obstacle method. Thus, our proposed method will be more time-consuming compared with the VO method; nevertheless, the quality of resulting trajectories planned by our method is much higher than that of the VO method.
\begin{figure}[th]
	\centering
	\includegraphics[width=0.4\textwidth, trim=0 20 0 15, clip]{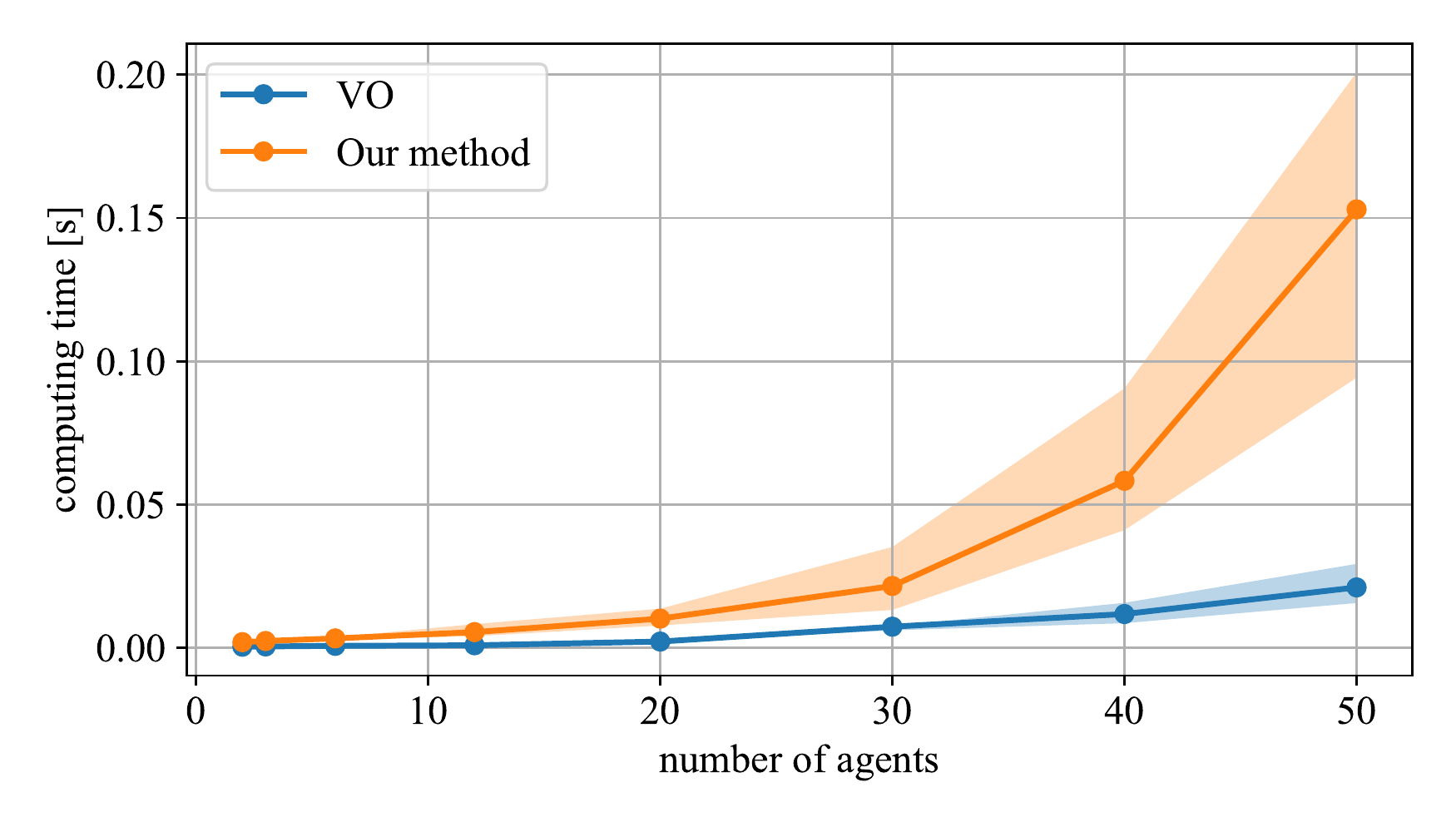}
	\caption{Changes of the computing time per agent using VO and our method with different numbers of agents.}
	\label{fig:comp_time_with_diff_number_vo}
\end{figure}

\subsection{Comparison with Position-Based MPC Method}
In this section, we show the comparison results of our proposed method and the MPC method that achieve the motion planning in position space. Different from our proposed motion planning method that takes the probabilistic chance constraints of velocity obstacles to avoid a collision in velocity space, the MPC method takes $\|p_i-p_j\|\geq d_{\mathrm{safe}}$ as collision-avoidance constraints to achieve the motion planning in position space. This MPC method is a well-applied collision-avoidance method and has been presented in many research works~\cite{dai2017distributed,huang2021dynamic}. In the following text, we abbreviate the MPC method in position space as PB-MPC for conciseness. All parameters in the PB-MPC method are set as the same value as in our method, and the $d_{\mathrm{safe}}$ in the PB-MPC method is set as $2r_i=0.4$~m.

The simulation task in this section is similar to the task in Section~\ref{section:results_compare1}. The number of agents is set as 6. The risk bound is $\delta_i=0.1$.
The trajectories of the 6 agents generated by the PB-MPC and our method are illustrated in Fig.~\ref{fig:comp_traj_mpc}. The initial positions of the 6 agents are represented by 6 circles with different colors. The radius of these circles in Fig.~\ref{fig:comp_traj_mpc} is equal to the agent radius $r_i=0.2$~m. According to this figure, both methods can navigate agents to reach their desired destinations without collision. However, it is straightforward that the resulting trajectories of our proposed method are much smoother than the trajectories of the PB-MPC method.  
\begin{figure}[t]
	\centering
	\includegraphics[width=0.4\textwidth, trim=10 20 0 20,clip]{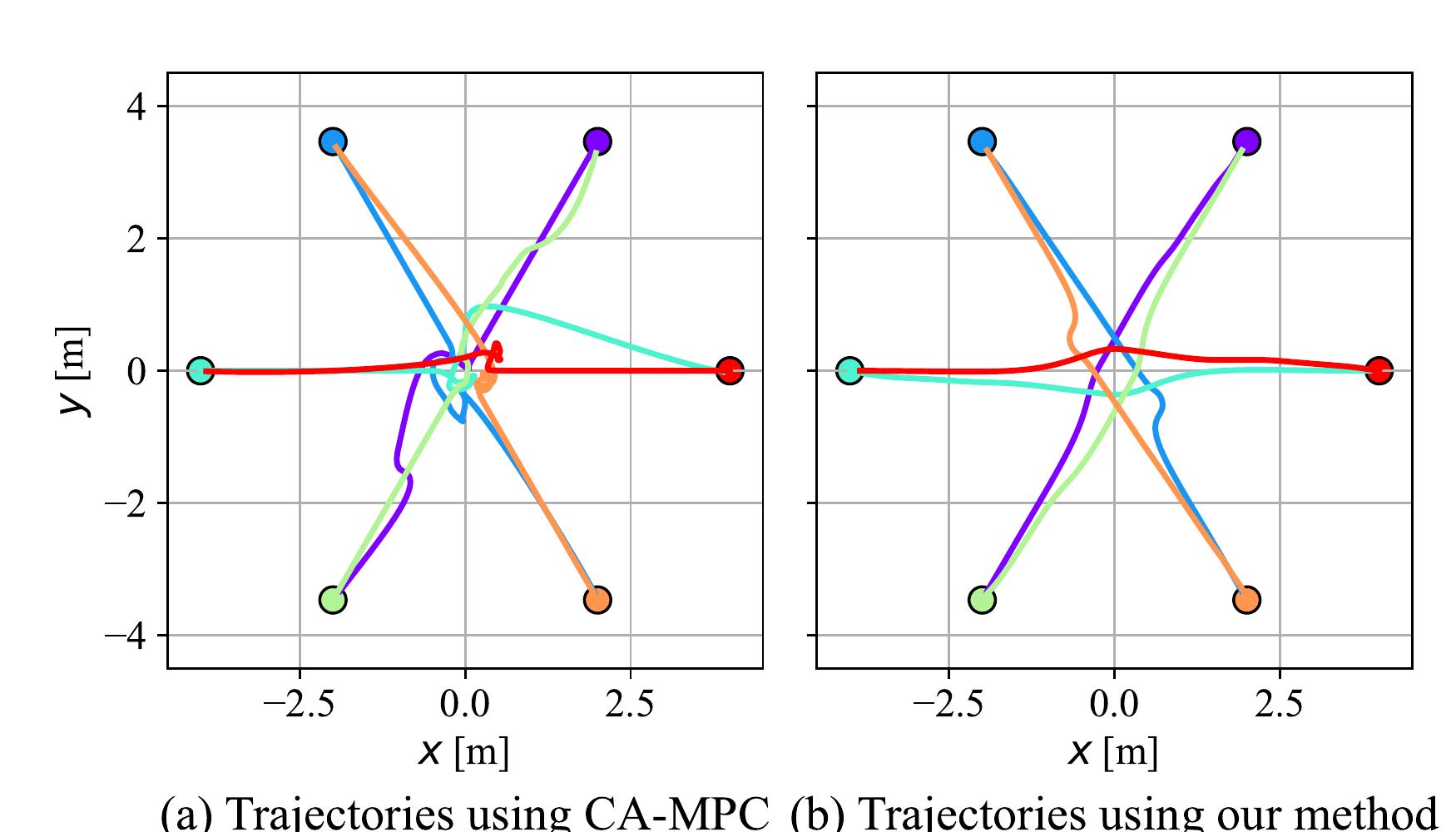}
	\caption{Trajectories of the 6 agents using the two methods.}
	\label{fig:comp_traj_mpc}
\end{figure}
Fig.~\ref{fig:comp_velo_mpc} demonstrates the change of velocities of one agent (randomly chosen from the 4 agents) with running step $t$ changing. The fluctuation around time interval [2,~4]~s indicates that the agent changes its velocity to avoid a collision with its neighbors. According to Fig.~\ref{fig:comp_velo_mpc}, the volatility of the velocity $v_i$ is much smoother than that of the PB-MPC method. 
The change of control inputs with the running step of the two methods is illustrated in Fig.~\eqref{fig:comp_input_mpc}. Here, the control inputs should be confined into the desired range [-10,~10]~m/s$^2$. Obviously, our proposed method could derive a smoother sequence of control inputs compared with the PB-MPC method.
\begin{figure}[th]
	\centering
	\includegraphics[width=0.4\textwidth, trim=0 20 0 15,clip]{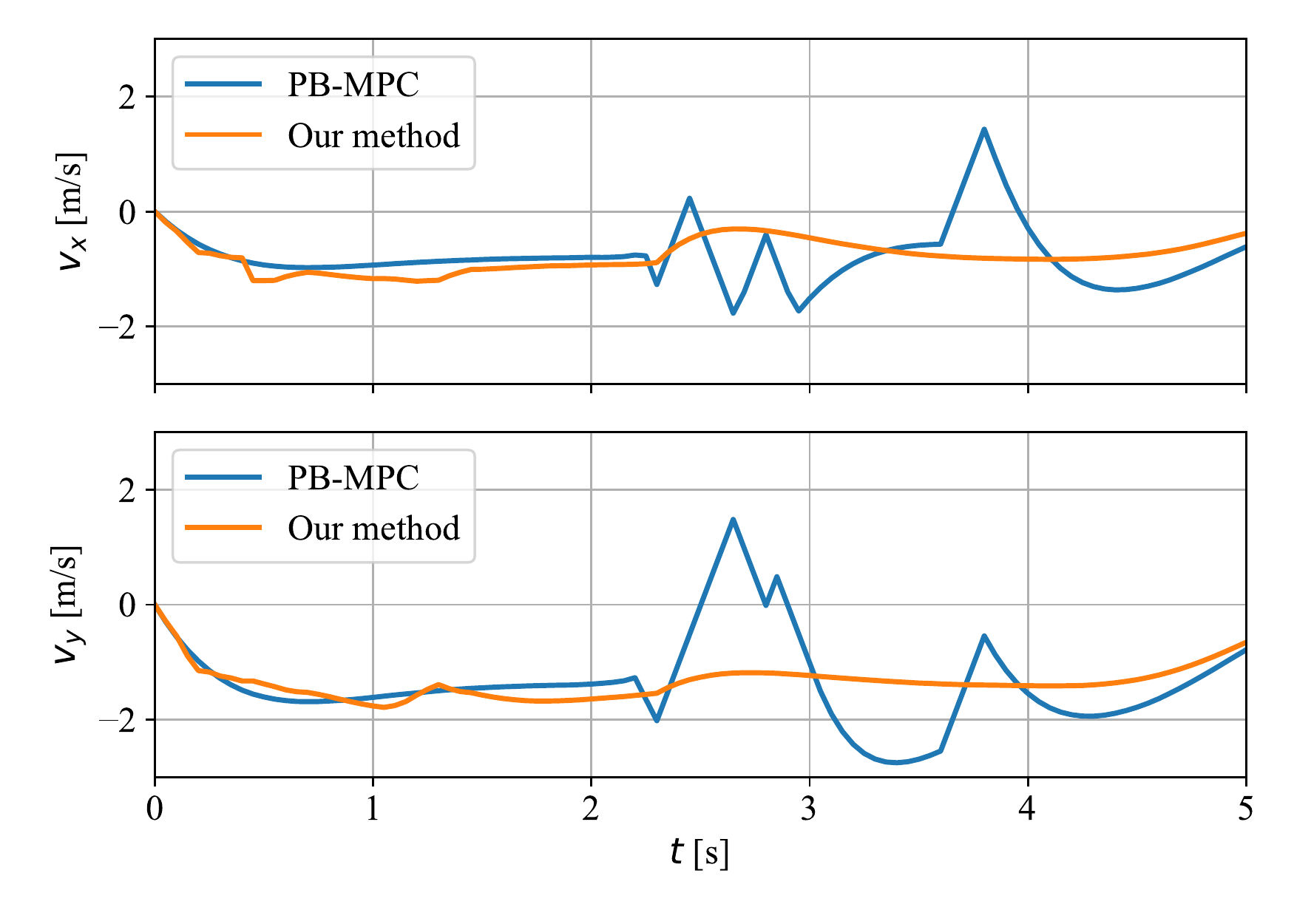}
	\caption{Velocities of one of the agents using the two methods (the agent is chosen randomly).}
	\label{fig:comp_velo_mpc}
\end{figure}
\begin{figure}[th]
	\centering
	\includegraphics[width=0.4\textwidth, trim=0 20 0 15,clip]{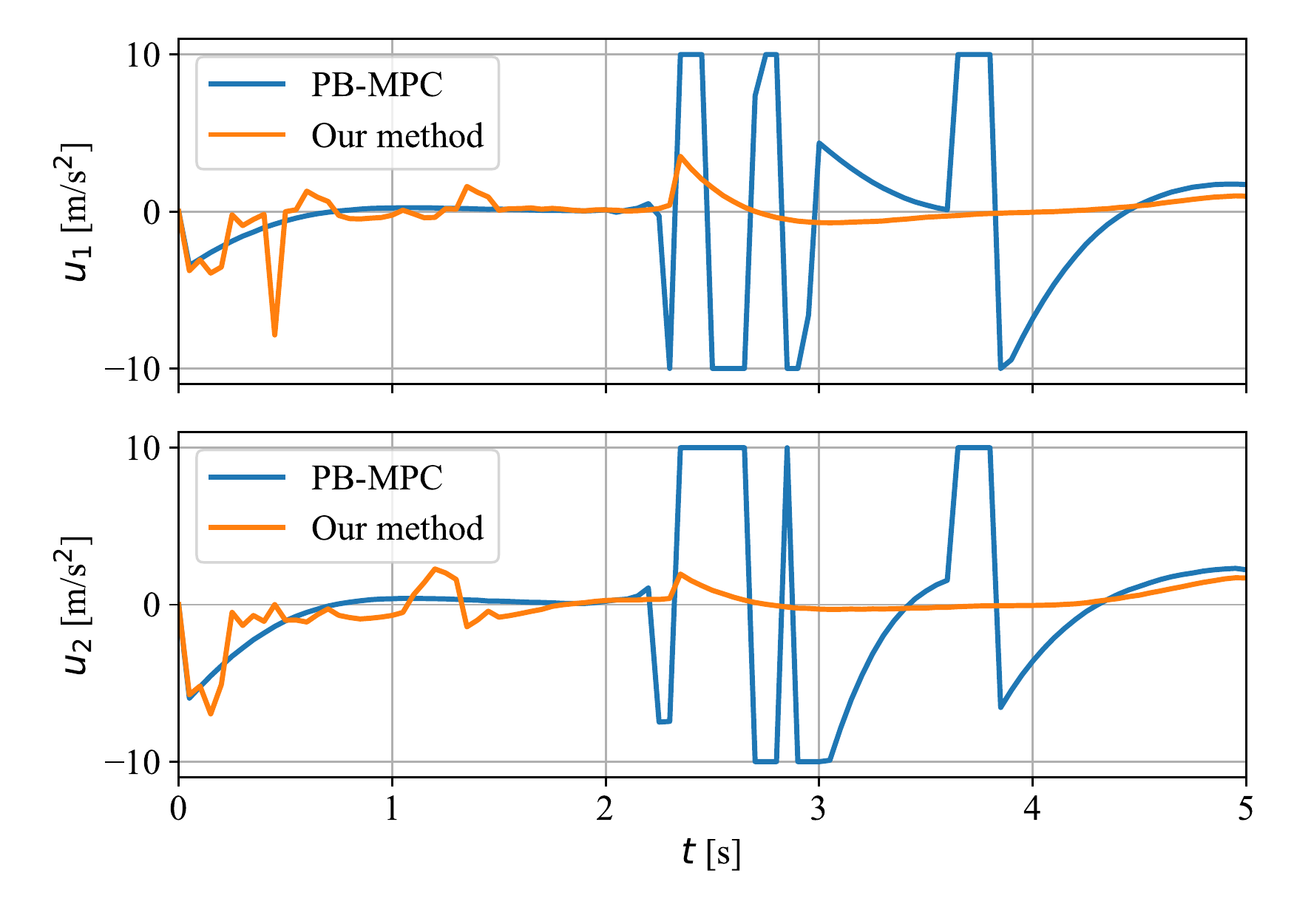}
	\caption{Control inputs of one of the agents using the two methods (the agent is chosen randomly).}
	\label{fig:comp_input_mpc}
\end{figure}

Fig.~\ref{fig:comp_time_mpc} is used to show the computational efficiency of our proposed method, compared with the PB-MPC method. This figure demonstrates the comparison of the average computing time per running step of the two methods. 
In this figure, the solid lines denote the average computing time of the two methods per running step per agent. The filled regions represent the range of computing time for all agents of the two methods. According to Fig.~\ref{fig:comp_time_mpc}, we can observe that the average computing time of our proposed method is always smaller than that of the PB-MPC method. 
Fig.~\ref{fig:comp_time_with_diff_number_mpc} presents the change of average computing time per agent with respect to different number of agents. Obviously, the average computing time increases with the number of agents increasing for both methods. Nevertheless, the increasing rate is different between the two methods. The average computational time of our proposed method increases slower than that of the PB-MPC method. 
\begin{figure}[th]
	\centering
	\includegraphics[width=0.4\textwidth, trim=0 20 0 15,clip]{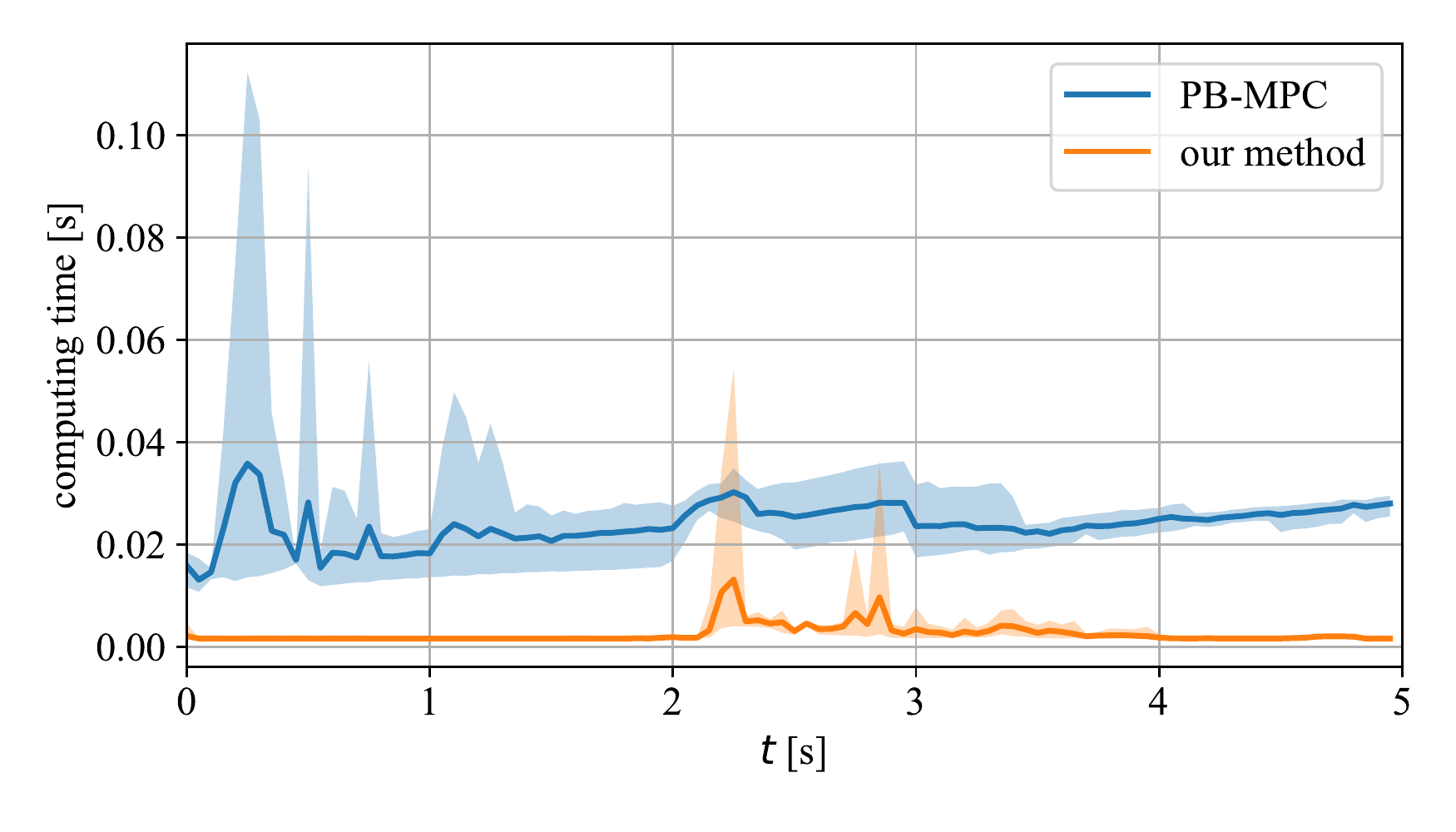}
	\caption{Average computing time using the two methods at each running step.}
	\label{fig:comp_time_mpc}
\end{figure}
\begin{figure}[thp]
	\centering
	\includegraphics[width=0.4\textwidth, trim=0 20 0 15, clip]{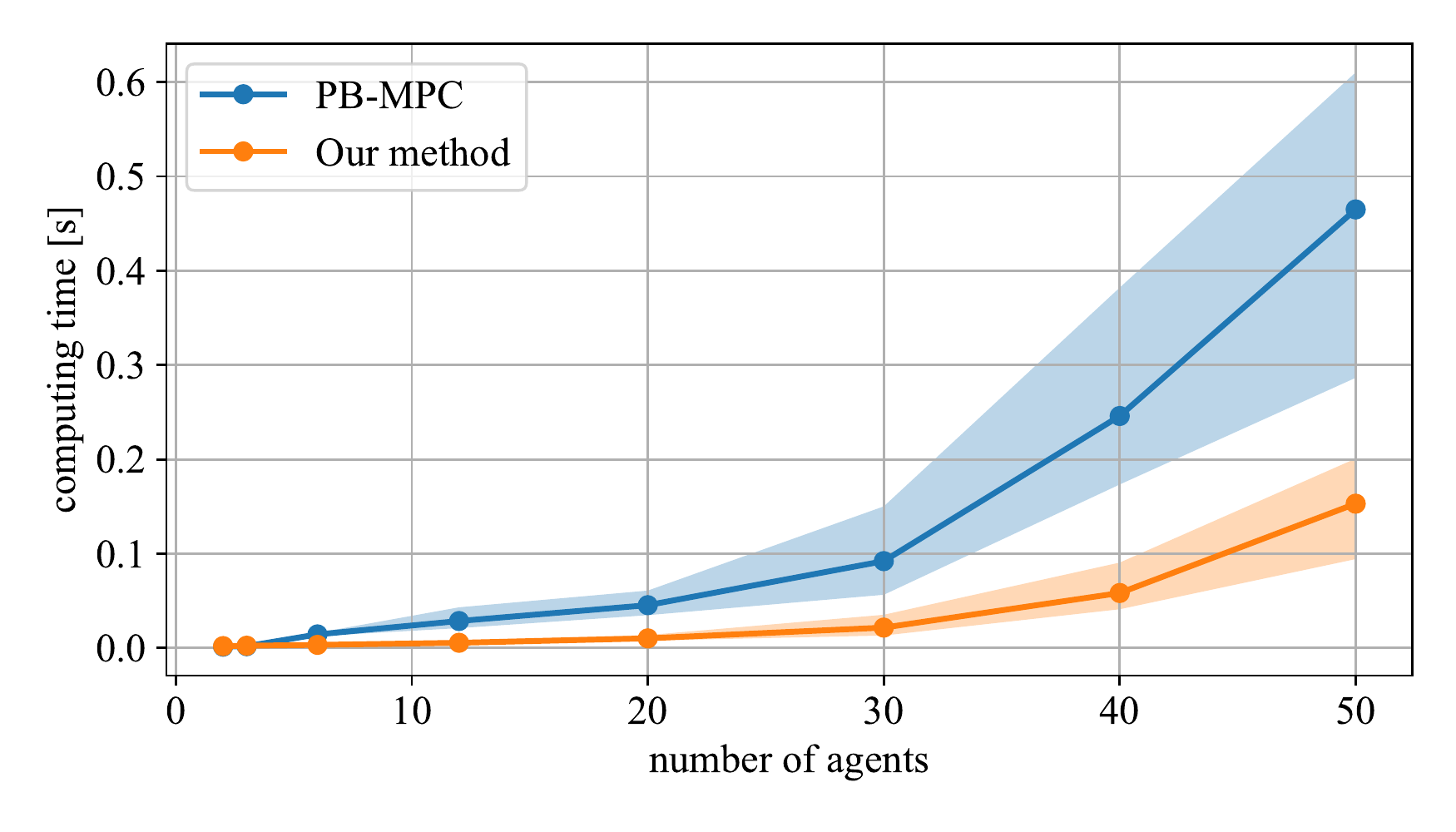}
	\caption{Average computing time using the two methods regarding different number of agents.}
	\label{fig:comp_time_with_diff_number_mpc}
\end{figure}

In addition, we compare the results of our method under three different levels of measurement noise $\frac{1}{4}W_i$, $W_i$, and $4W_i$ with the PB-MPC method. The minimum distance between each pair of agents and the success rate are treated as the safety metrics. Since the radius of agents is 0.2~m, the minimum safety distance without collision should be $2r_i = 0.4$~m. The comparison results are shown in Table~\ref{tab:diff_noise}. Besides, due to a tighter bound of collision risk approximation, our method can keep a larger minimum distance while running under the same noise level, compared with the PB-MPC method. Also, with a larger noise level, our proposed method maintains the success rate of 100\%, but the success rate of PB-MPC decreases from 91\% to 78\%. According to Table~\ref{tab:diff_noise}, we can observe that our proposed method achieves higher safety performance compared with the PB-MPC method.
\begin{table}[th]	
	\renewcommand{\arraystretch}{1.3}
	\caption{Trajectory Safety Comparison of Two Algorithms with Different Levels of Noise (The Values Are Computed from Successful Runs).}
	\label{tab:diff_noise}
	\centering
	\begin{tabular}{|c|c|c|c|}
		\hline
		Noise &  Safety metrics   & PB-MPC method & Our method  \\ \hline
		\multirow{2}{*}{$\frac{1}{4}W_i$} & Minimum distance  & 0.412 m   & 0.447 m  \\ \cline{2-4} 
		& Success rate       & 91 \%   & 100 \%                        \\ \hline
		\multirow{2}{*}{$W_i$}            & Minimum distance    & 0.436 m    & 0.469 m   \\ \cline{2-4} 
		& Success rate        & 84 \%     & 100 \%                     \\ \hline
		\multirow{2}{*}{$4W_i$}            & Minimum distance    & 0.448 m   & 0.491 m    \\ \cline{2-4} 
		& Success rate         & 78\%     & 100 \%                   \\ \hline
	\end{tabular}
\end{table}

\section{Conclusion}
\label{section:conclusion}
In this paper, a velocity obstacle based risk-bounded robust motion planning method is proposed for stochastic multi-agent systems. In this method, the disturbance, noise, and model uncertainty are considered to enhance the robustness of this method. A chance-constrained MPC problem is formulated based on the feasible region of velocity vector provided by the velocity obstacles method. The feasible collision-free regions of the ego agent's velocity vector and position vector are derived and formulated as probabilistic collision constraints. Hence the proposed method plans the trajectories at the velocity space to avoid collisions with other agents or moving obstacles; and thus, the quality of resulting trajectories and computational efficiency is certainly improved. The introduction of chance constraints also guarantees the appropriate bound of potential collision risk during the robust motion planning process for the stochastic system. Several simulation scenarios for multiple agents are employed to validate the effectiveness and efficiency of our proposed methodology.

%In terms of future work, one prospective research is to realize motion planning in 3-dimensional space. Moreover, another future work is to consider the uncertainty in shape and velocity of the agents at the same time. 

\section*{APPENDIX}
\label{appendix:param}
Table~\ref{tab:setting_params} shows the value of all parameters in simulation.
\begin{table}[th]
	\renewcommand{\arraystretch}{1.3}
	\caption{Parameter Setting}
	\label{tab:setting_params}
	\begin{center}
		\begin{tabular}{|p{.12\textwidth}<{\centering}|p{.045\textwidth}<{\centering}|p{.18\textwidth}<{\centering}|p{.045\textwidth}<{\centering}| }
			\hline
			Meaning & Notation & Value &Unit \\  \hline
			Radius & $r_i$ & 0.2 & m \\ \hline
			Sampling time & $\tau_s$ & 0.05 & s \\ \hline
			Running steps & $t_{\textup{run}}$ & 5 & s \\ \hline
			State weighting matrix & $Q_i$  $Q_{i|N}$ & $\operatorname{diag}(10, 10, 0.1, 0.1)$ $\operatorname{diag}(10, 10, 0,0)$ & -  \\ \hline
%			Terminal state weighting matrix & $Q_{i|N}$ & $\operatorname{diag}(10, 10, 0,0)$ & -  \\ \hline
			Weighting matrix & $R_i$ & $\operatorname{diag}(0.1, 0.1)$ & -  \\ \hline
			Minimum states & $x_{\min,i}$ & $\left[-\infty \; -\infty \; -10 \; -10\right]^\top$   & m, m/s  \\ \hline
			Maximum states & $x_{\max,i}$ & $\left[ \infty \;  \infty \; 10 \; 10\right]^\top$  & m, m/s  \\ \hline
			Minimum inputs & $u_{\min,i}$ & $\left[ -10 \;  -10\right]^\top$ &  {m/s}$^2$  \\ \hline 
			Maximum inputs & $u_{\max,i}$ & $\left[ 10 \; 10 \right]^\top$   &  {m/s}$^2$  \\ \hline
			Noise covariance & $W_i$ & $\operatorname{diag}$(1e-4,1e-4,1e-2,1e-2) & - \\ \hline
			Noise covariance & $P_i$ & $\operatorname{diag}$(1e-6,1e-6,1e-6,1e-6) & - \\ \hline
%			States dimension & $n_x$ & 4 & - \\ \hline
%			Inputs dimension & $n_u$ & 2 & - \\ \hline
			Constant scalar & $M$ & 1e4 & - \\ \hline
		\end{tabular}
	\end{center}
\end{table}

\bibliographystyle{IEEEtran}
\bibliography{IEEEabrv,Reference}

\end{document}